\documentclass{article}

\usepackage{microtype}
\usepackage{graphicx}
\usepackage{subcaption}
\usepackage{multirow}
\usepackage{array} 
\usepackage{booktabs} 

\usepackage{hyperref}
\usepackage{algorithmic}

\usepackage{xcolor}

\definecolor{pycomment}{HTML}{008000}
\newcommand{\PyCmt}[1]{\hfill{\textcolor{pycomment}{\footnotesize\#~#1}}}
\newcommand{\PyCmtState}[1]{%
  \STATE \textcolor{pycomment}{\footnotesize\#~#1}%
}
\renewcommand{\algorithmiccomment}[1]{\PyCmt{#1}}

\usepackage[preprint]{icml2026}

\usepackage{amsmath}
\usepackage{amssymb}
\usepackage{mathtools}
\usepackage{amsthm}
\usepackage{aliascnt}

\usepackage[capitalize,noabbrev]{cleveref}

\usepackage{bm}
\usepackage{dsfont}
\usepackage{siunitx}
\DeclareMathOperator*{\argmin}{argmin}
\DeclareMathOperator*{\argmax}{argmax}
\usepackage{arydshln}
\usepackage{comment}

\newcommand{\ind}{\mathds{1}}
\newcommand{\X}{\mathcal{X}}

\newcommand{\XID}{\mathcal{X}_{\mathrm{ID}}}
\newcommand{\Dcal}{\mathcal{D}_{\mathrm{cal}}}
\newcommand{\Dsel}{\mathcal{D}_{\mathrm{sel}}}
\newcommand{\Dfit}{\mathcal{D}_{\mathrm{fit}}}

\newcommand{\Dtest}{\mathcal{D}_{\mathrm{test}}}

\theoremstyle{plain}

\newaliascnt{proposition}{theorem}
\newtheorem{proposition}[proposition]{Proposition}
\aliascntresetthe{proposition}
\newaliascnt{lemma}{theorem}

\aliascntresetthe{lemma}
\newaliascnt{corollary}{theorem}

\aliascntresetthe{corollary}
\theoremstyle{definition}
\newaliascnt{definition}{theorem}

\aliascntresetthe{definition}
\newaliascnt{assumption}{theorem}
\newtheorem{assumption}[assumption]{Assumption}
\aliascntresetthe{assumption}
\theoremstyle{remark}
\newaliascnt{remark}{theorem}

\aliascntresetthe{remark}
\crefname{theorem}{Theorem}{Theorems}
\Crefname{theorem}{Theorem}{Theorems}
\crefname{problem}{Problem}{Problems}
\Crefname{problem}{Problem}{Problems}
\crefname{proposition}{Proposition}{Propositions}
\Crefname{proposition}{Proposition}{Propositions}
\crefname{lemma}{Lemma}{Lemmas}
\Crefname{lemma}{Lemma}{Lemmas}
\crefname{corollary}{Corollary}{Corollaries}
\Crefname{corollary}{Corollary}{Corollaries}
\crefname{definition}{Definition}{Definitions}
\Crefname{definition}{Definition}{Definitions}
\crefname{assumption}{Assumption}{Assumptions}
\Crefname{assumption}{Assumption}{Assumptions}
\crefname{remark}{Remark}{Remarks}
\Crefname{remark}{Remark}{Remarks}

\usepackage[textsize=tiny]{todonotes}

\icmltitlerunning{PINE: Pruning Boosted Tree Ensembles with Conformal In-Distribution Prediction Equivalence}

\begin{document}

\twocolumn[
  \icmltitle{PINE: Pruning Boosted Tree Ensembles with \\
    Conformal In-Distribution Prediction Equivalence}



  \icmlsetsymbol{equal}{*}

  \begin{icmlauthorlist}
    \icmlauthor{Haruki Yajima}{uni}
    \icmlauthor{Yusuke Matsui}{uni}
  \end{icmlauthorlist}

  \icmlaffiliation{uni}{The University of Tokyo}

  \icmlcorrespondingauthor{Haruki Yajima}{yajima@hal.t.u-tokyo.ac.jp}

  \icmlkeywords{Tree ensembles, Ensemble pruning, Prediction equivalence, Conformal prediction}

  \vskip 0.3in
]



\printAffiliationsAndNotice{}  

\begin{abstract}
Tree ensembles are machine learning models with strong predictive performance and interpretability, and remain widely used for tabular data. 
Standard pruning methods for tree ensembles typically optimize an accuracy-compression trade-off and may change a subset of predictions, potentially compromising decision consistency. 
Faithful pruning methods address this issue by preserving prediction equivalence over the entire input space, but this requirement leads to lower compression ratios. 
We propose \textbf{PINE}, a pruning method that provides strong guarantees within an in-distribution region. 
PINE preserves prediction equivalence within this region and controls the region size using a single parameter $\alpha$ via conformal calibration. 
Experiments on 12 public tabular datasets show that PINE improves the compression ratio by up to $30\%$ while preserving predictions at a comparable level to existing faithful pruning methods.
\end{abstract}

\section{Introduction}
Tree-based ensemble methods, such as Random Forests~\cite{Breiman2001RandomForests}, Gradient-Boosted Decision Trees (GBDT)~\cite{Friedman2001GBM}, and XGBoost~\cite{Chen2016XGBoost}, often achieve strong performance on tabular data~\cite{Grinsztajn2022WhyTrees}. Their interpretability has also enabled adoption in high-stakes domains such as medicine and finance~\cite{Caruana2015IntelligibleHealthcare,Bussmann2021ExplainableCreditRisk}.
Increasing the number of trees is well known to improve accuracy~\cite{biau2016rftour,probst2017tunerf,buschjaeger2021}. However, larger ensembles increase inference time and memory footprint~\cite{Lucchese2017QuickScorerPKDD,Ye2018RapidScorer,Chen2022EfficientRealization}, and can make downstream verification of robustness and fairness more challenging~\cite{Kantchelian2016Evasion,Devos2021Veritas,Ranzato2021FairnessAwareTraining,Calzavara2023ExplainableGlobalFairness}. Consequently, post-hoc ensemble compression via ensemble pruning is an important problem.

When pruning a deployed tree ensemble, faithful pruning methods~\cite{Vidal2020BornAgainTrees,Emine2025FIPE} are a natural option because they preserve predictions before and after pruning for all inputs, but this strong requirement often leads to lower compression.
These methods focus on fidelity, i.e., the fraction of inputs on which the pruned model's predictions match those of the original model, rather than test accuracy. 
Low fidelity means that many predictions differ before and after pruning, which can affect model-based decision making~\cite{MilaniFard2016LaunchIterate,BahriJiang2021PredictiveChurn}.
Such changes are especially costly in high-stakes settings, including downstream workflows triggered by model outputs~\cite{srivastava2020backward}, robustness or fairness checks built around the model~\cite{Devos2021Veritas,Calzavara2023ExplainableGlobalFairness}, and explanations or recourse suggestions derived from its decisions~\cite{Rudin2019StopExplaining,Parmentier2021OCEAN}.
However, by enforcing prediction equivalence even for out-of-distribution (OOD) inputs that are rarely observed in practice, faithful pruning can yield lower compression than standard accuracy-oriented pruning methods.

We propose \textbf{Pruning with In-distribution Equivalence (PINE)}, a pruning method for decision tree ensembles that enables higher compression by guaranteeing prediction equivalence only within an in-distribution region.
With a single hyperparameter $\alpha$, PINE defines an in-distribution region that includes future inputs sampled from the same distribution with probability at least $1-\alpha$.
PINE guarantees exact prediction equivalence before and after pruning within this region.
This design makes it possible to systematically control the fidelity-compression trade-off via $\alpha$.
In the toy example (\cref{fig:moons}), PINE prunes 23\% more trees than a faithful pruning baseline while preserving comparable fidelity.
Empirically, across 12 public tabular datasets, PINE improves the compression ratio by up to $30\%$ over faithful pruning baselines while maintaining comparable fidelity, and it substantially improves fidelity over accuracy-oriented pruning methods at similar test accuracy.

\begin{figure*}[t]
    \centering
    \includegraphics[width=0.67\linewidth]{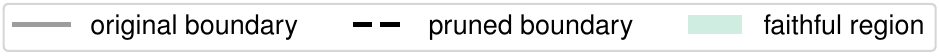}
    \vspace{-1.5ex}
  \begin{subfigure}{0.33\linewidth}
    \centering
    \includegraphics[width=\linewidth]{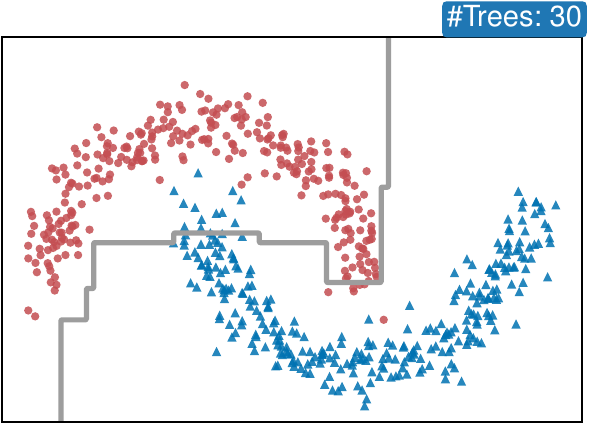}
    \caption{Original ensemble}
    \label{fig:moons-a}
  \end{subfigure}
  \hfill
  \begin{subfigure}{0.33\linewidth}
    \centering
    \includegraphics[width=\linewidth]{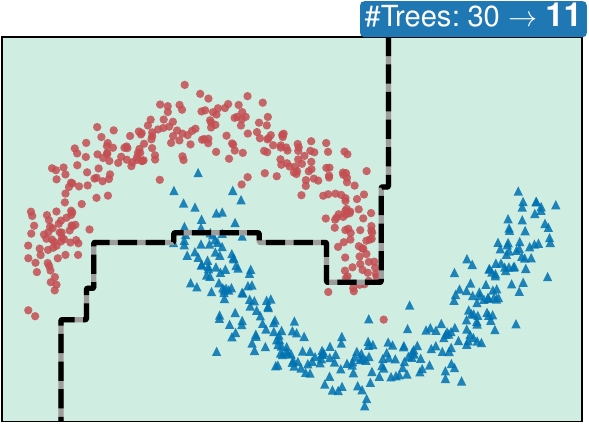}
    \caption{FIPE~\cite{Emine2025FIPE}}
    \label{fig:moons-b}
  \end{subfigure}
  \hfill
  \begin{subfigure}{0.33\linewidth}
    \centering
    \includegraphics[width=\linewidth]{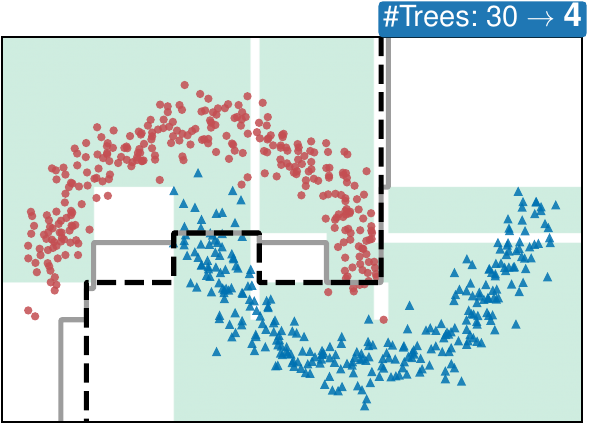}
    \caption{Ours}
    \label{fig:moons-c}
  \end{subfigure}

  \caption{Comparison of pruning results between the baseline FIPE and the proposed PINE on a 2D synthetic dataset. The green faithful region indicates where prediction equivalence is guaranteed.
    (a) Decision boundary of a trained decision tree ensemble consisting of 30 trees.
    (b) FIPE~\cite{Emine2025FIPE} guarantees prediction equivalence with the original model over the entire input space. However, because it also preserves fine-grained boundaries in out-of-distribution regions such as the top-right corner, it can reduce the ensemble only from 30 trees to 11, resulting in lower compression.
    (c) PINE guarantees prediction equivalence only within a region aligned with the data distribution. By replacing fine-grained out-of-distribution boundaries with simpler ones, it reduces the ensemble from 30 trees to 4 while achieving comparable fidelity.}
  \label{fig:moons}
\end{figure*}

\section{Related Work}
\paragraph{Accuracy-oriented pruning of tree ensembles.}
Standard pruning methods for decision tree ensembles optimize an accuracy-compression trade-off~\cite{MargineantuDietterich1997Pruning,Zhang2006EnsemblePruningSDP}.
Representative tree-level approaches select a subset of trees based on contribution and diversity~\cite{Lu2010EPIC,Li2012DREP,Guo2018MDEP} or sparsify post-training tree weights via $L_1$ regularization~\cite{BuschjaegerMorik2023L1}.
Finer-grained compression methods include leaf refinement, which re-estimates leaf predictions~\cite{Ren2015GlobalRefinement}, and methods that directly remove internal nodes or subtrees to increase the pruning rate~\cite{LiuMazumder2023ForestPrune,Devos2025LOP}.
These approaches aim to reduce model size while tolerating some loss in accuracy, and they do not guarantee prediction equivalence with the original model.

\paragraph{Faithful pruning of tree ensembles.}
For decision tree ensembles, faithful pruning methods have been proposed to compress models while ensuring exact prediction equivalence before and after pruning.
Born-Again Tree Ensembles~\cite{Vidal2020BornAgainTrees} exploit the fact that a tree ensemble partitions the input space into finitely many regions, and use dynamic programming to construct a single decision tree that is equivalent to the original ensemble.
FIPE~\cite{Emine2025FIPE} identifies, via iterative optimization, the sparsest tree weights that preserve predictions of the original model.
However, because these methods require prediction equivalence over the entire input space, their constraints also apply to out-of-distribution regions that are rarely observed, thereby limiting compression.
We improve the fidelity-compression trade-off by restricting the guarantee region to an in-distribution region.

\section{Preliminaries}\label{sec:preliminaries}
\subsection{Tree Ensembles}\label{sec:tree-ensemble}
Here, we introduce decision tree ensembles, which aggregate predictions from multiple decision trees to produce a final prediction~\cite{Hastie2009Elements}.
Let $\mathcal{X} \subseteq \mathbb{R}^p$ be a $p$-dimensional input space, and let $\mathcal{Y}=\{1,\dots,C\}$ be the set of classes in a $C$-class classification problem.
Consider a decision tree ensemble $\mathcal{T}=\{T_m\}_{m=1}^M$ consisting of $M$ decision trees.
Each tree $T_m$ has finitely many leaves, with leaf index set $\mathcal{L}_m = \{1,2,\dots\}$.
During training, each leaf $\ell\in\mathcal{L}_m$ is associated with a constant vector $\bm{v}_{m,\ell}\in\mathbb{R}^C$ representing per-class prediction scores.
Let $\lambda_m:\mathcal{X}\to\mathcal{L}_m$ be the function that returns the leaf index reached by input $\bm{x}\in\mathcal{X}$ in tree $T_m$; then the prediction score of tree $T_m$ on $\bm{x}$ is $\bm{v}_{m,\lambda_m(\bm{x})}$.
By construction of the decision tree, $\lambda_m(\bm{x})$ is uniquely determined for any input $\bm{x}$.

Under this structure, we define the prediction function of $\mathcal{T}$.
Each tree $T_m$ has a non-negative weight $w^{(0)}_m\in\mathbb{R}_{\ge0}$, and we denote the weight vector by $\bm{w}^{(0)}=(w^{(0)}_1,\dots,w^{(0)}_M)^\top$.
The superscript $(0)$ indicates weights of the trained ensemble before pruning.
The prediction score vector of the entire ensemble for input $\bm{x}\in\mathcal{X}$ is
\begin{equation}
  \bm{F}(\bm{x};\bm{w}^{(0)})
  :=\sum_{m=1}^M w^{(0)}_m\,\bm{v}_{m,\lambda_m(\bm{x})},
  \label{eq:ensemble-score}
\end{equation}
and the predicted class is defined as the one with the maximum score:
\begin{equation}
  \hat{y}(\bm{x};\bm{w}^{(0)})
  :=\argmax_{c\in\mathcal{Y}} \bigl[\bm{F}(\bm{x};\bm{w}^{(0)})\bigr]_c,
  \label{eq:ensemble_prediction}
\end{equation}
where $\hat{y}(\bm{x};\bm{w}^{(0)})$ denotes the ensemble prediction.
In the case of ties in \cref{eq:ensemble_prediction}, we choose the smallest class index so that $\hat{y}$ is deterministic.
Hereafter, we distinguish the original weights $\bm{w}^{(0)}$ from the post-pruning weights $\bm{w}$.
Since \cref{eq:ensemble-score} is uniquely determined by $\bm{w}$, when the context is clear, we may refer to $\bm{w}$ as the model, meaning the weighted ensemble itself.

\subsection{Faithful Pruning}\label{sec:fipe}
FIPE~\cite{Emine2025FIPE} is a pruning method for decision tree ensembles that adjusts the ensemble weights $\bm{w}^{(0)}$ to find the sparsest weights $\bm{w}$ that preserve predictions for all inputs $\bm{x}\in\mathcal{X}$.
This procedure is known as \emph{faithful pruning} and is formulated as
\begin{subequations}\label{eq:fipe}
\begin{align}
  \argmin_{\bm{w}\in\mathbb{R}_{\ge 0}^M} \;& \|\bm{w}\|_0 \label{eq:fipe-objective}\\
  \text{s.t. }\;& \hat{y}(\bm{x};\bm{w}) = \hat{y}(\bm{x};\bm{w}^{(0)}) ,~ \forall \bm{x}\in\mathcal{X},
  \label{eq:fipe-constraint}
\end{align}
\end{subequations}
where $\|\bm{w}\|_0$ denotes the number of non-zero weights. Trees with zero weight are removed, so minimizing $\|\bm{w}\|_0$ reduces the number of trees.
Although $\mathcal{X}$ is continuous, tree ensembles partition $\mathcal{X}$ into finitely many cells on which $\hat{y}(\cdot)$ is constant; thus \cref{eq:fipe} amounts to enforcing a large (but finite) set of equivalence constraints.
As explicitly enumerating these constraints is intractable, FIPE solves the problem by iteratively applying \texttt{Pruner} and \texttt{Oracle}.
Note that because $\argmax$ is scale-invariant, solutions are non-unique up to positive scaling; one may fix scale (e.g., $\sum_m w_m=1$) without affecting pruning.

Specifically, \texttt{Oracle} searches for inputs whose predictions differ between the current weights $\bm{w}$ and the original weights $\bm{w}^{(0)}$, and \texttt{Pruner} updates weights to satisfy the constraints.
Let $\bm{x}^{\star}$ be an input that changes the prediction, i.e., satisfies $\hat{y}(\bm{x}^{\star};\bm{w})\neq \hat{y}(\bm{x}^{\star};\bm{w}^{(0)})$. We call such an input a \emph{counterexample}.
\texttt{Oracle} checks whether counterexamples exist for the current weights $\bm{w}$, and returns a set of counterexamples if they exist.
This check can be formulated as selecting a combination of leaves that realizes a score change sufficient to alter the predicted class, and can be solved as a Mixed-Integer Linear Programming (MILP).
FIPE adopts the MILP formulation of OCEAN~\cite{Parmentier2021OCEAN}.
If no counterexample is found, the current $\bm{w}$ is guaranteed to satisfy \cref{eq:fipe-constraint} over the entire input space.
\texttt{Pruner} returns the sparsest weights $\bm{w}$ that satisfy prediction equivalence on the set of counterexamples found so far.
FIPE solves \cref{eq:fipe} by applying \texttt{Pruner} and \texttt{Oracle} until no new counterexamples are found.

\subsection{Threshold Calibration via Split Conformal}\label{subsec:split-conformal}
We review split conformal prediction, which we use to control the size of the guarantee region in our method.
Given a real-valued score function $s:\mathcal{X}\to\mathbb{R}$ on inputs $\bm{x}\in\mathcal{X}$, we consider the problem of choosing a threshold $\tau$ so that future inputs satisfy $s(\bm{x})\le\tau$ with a desired probability.
Split conformal prediction (also known as inductive conformal prediction)~\cite{Vovk2005Algorithmic,ShaferVovk2008Tutorial,Lei2018DistributionFree,AngelopoulosBates2021Gentle} is a standard approach for this purpose.

Let $\Dcal=\{\bm{x}_i\}_{i=1}^{n}$ be a calibration set of inputs obtained i.i.d.\ from an unknown data distribution $P_0$.
Fix a miscoverage level $\alpha\in(0,1)$, i.e., the probability that a future input does not satisfy the threshold condition.
In split conformal prediction, we calibrate the threshold $\tau(\alpha)$ as an order statistic corresponding to the $(1-\alpha)$ quantile of the scores on $\Dcal$.
Let $s_{(1)}\le\cdots\le s_{(n)}$ denote the order statistics of the calibration scores $\{s(\bm{x}_i)\}_{i=1}^n$.
Here, the score function $s$ can be chosen by the practitioner.
We set the threshold $\tau(\alpha)$ as
\begin{equation}
  \tau(\alpha) := s_{(k)},~
  k := \left\lceil (n+1)(1-\alpha)\right\rceil.
  \label{eq:conformal-threshold}
\end{equation}
If $k=n+1$, we interpret $\tau(\alpha)=+\infty$.
Note that $\tau(\alpha)$ is non-increasing in $\alpha$, since increasing $\alpha$ corresponds to a lower $(1-\alpha)$ quantile of the calibration scores.

Under the assumption that the calibration set $\Dcal$ and the future input are exchangeable (i.e., their joint distribution is invariant to permutations), split conformal provides a distribution-free coverage guarantee.
Let $\bm{X}_{\mathrm{new}}$ be a future input random variable taking values in $\mathcal{X}$.
With a slight abuse of notation, we write $s(\bm{X}_{\mathrm{new}})$ for the induced real-valued random variable.
Then, the following inequality holds:
\begin{equation}
  \mathbb{P}\!\left[s(\bm{X}_{\mathrm{new}})\le \tau(\alpha)\right]\ge 1-\alpha.
  \label{eq:score-coverage}
\end{equation}
Therefore, at inference time, it suffices to check whether the observed input $\bm{x}_{\mathrm{new}}$ satisfies $s(\bm{x}_{\mathrm{new}})\le \tau(\alpha)$.

\section{PINE: Pruning with In-Distribution Equivalence}\label{sec:pine}

\begin{figure*}[t]
  \centering
  \includegraphics[width=\linewidth]{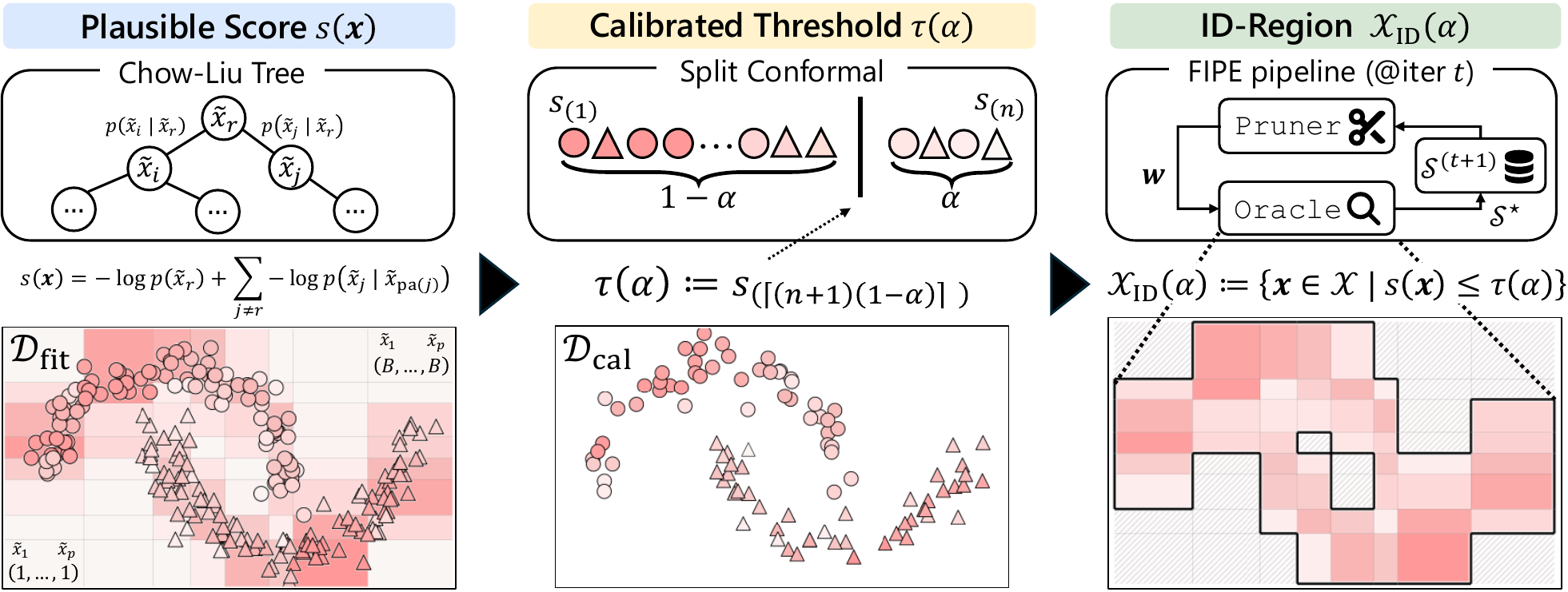}
  \caption{Overview of PINE.
  We calibrate an in-distribution region based on a plausible score $s(\cdot)$ via split conformal prediction, and search for counterexamples using an Oracle whose search is restricted to this region $\XID(\alpha)$.
  If no counterexample exists, exact prediction equivalence holds on $\XID(\alpha)$.}
  \label{fig:pipeline}
\end{figure*}

\begin{algorithm}[t]
\caption{Pruning with In-distribution Equivalence}
\label{alg:pine}
\begin{algorithmic}[1]
\REQUIRE
tree ensemble $\mathcal{T}$,
original weights $\bm{w}^{(0)}$,
miscoverage $\alpha$,
fit set $\mathcal{D}_{\mathrm{fit}}$,
calibration set $\mathcal{D}_{\mathrm{cal}}$
\ENSURE pruned weights $\bm{w}$

\STATE Fit plausible score function $s(\cdot)$ on $\mathcal{D}_{\mathrm{fit}}$
\STATE Calibrate threshold $\tau(\alpha)$ using $\mathcal{D}_{\mathrm{cal}}$
\STATE $\mathcal{S}^{(0)} \gets \mathcal{D}_{\mathrm{fit}}$
\STATE $t \gets 0$

\PyCmtState{At most $\mathcal{O}(e^{\tau(\alpha)})$ times (see \cref{prop:cl-state-bound})}
\REPEAT
  \PyCmtState{$\mathcal O(M + |\mathcal{S}^{(t)}|)$ constraints}
  \STATE $\bm{w} \gets \textsc{Pruner}(\mathcal{T}, \bm{w}^{(0)}, \mathcal{S}^{(t)})$

  \PyCmtState{$\mathcal O(pB^2)$ constraints}
  \STATE $\mathcal{S}^\star \gets \textsc{Oracle}(\mathcal{T}, \bm{w}^{(0)}, \bm{w}, s(\cdot), \tau(\alpha))$

\IF{$\mathcal{S}^\star \neq \varnothing$}
    \STATE $\mathcal{S}^{(t+1)} \gets \mathcal{S}^{(t)} \cup \mathcal{S}^\star$
    \STATE $t \gets t + 1$
  \ENDIF
\UNTIL{$\mathcal{S}^\star = \varnothing$}

\STATE \textbf{return} $\bm{w}$
\end{algorithmic}
\end{algorithm}

We propose a new pruning method for decision tree ensembles, \textbf{Pruning with In-distribution Equivalence (PINE)}.
As shown in \cref{alg:pine,fig:pipeline}, PINE builds on the FIPE~\cite{Emine2025FIPE} framework for faithful pruning, but provides a strong guarantee only for inputs within an in-distribution region $\XID(\alpha)\subseteq\X$: predictions before and after pruning are identical on $\XID(\alpha)$.
PINE controls the size of $\XID(\alpha)$ using a single parameter $\alpha$, which enables systematic control of the fidelity-compression trade-off.
Specifically, PINE defines the in-distribution region where prediction equivalence is guaranteed using a plausible score $s(\bm{x})$ and a threshold $\tau(\alpha)$ as
\begin{equation}
  \XID(\alpha) := \left\{\bm{x}\in\X \mid s(\bm{x}) \le \tau(\alpha)\right\},
  \label{eq:xid-def}
\end{equation}
where smaller scores indicate more plausible inputs.
We estimate the score function $s(\cdot)$ and train the decision tree ensemble using a dataset $\Dfit$.
We calibrate the threshold $\tau(\alpha)$ using a dataset $\Dcal$; in our experiments, $\Dfit$ and $\Dcal$ are obtained by splitting the training data.

\cref{alg:pine} summarizes PINE.
We maintain a growing constraint set $\mathcal{S}$ of inputs on which prediction equivalence is enforced during pruning.
We warm-start with $\mathcal{S}^{(0)} \gets \Dfit$.
At each iteration $t$,
given the current set $\mathcal{S}^{(t)}$, \texttt{Pruner} updates the weights to satisfy prediction equivalence on $\mathcal{S}^{(t)}$ (see \cref{sec:fipe}); the \texttt{Oracle} then returns counterexamples $\mathcal{S}^\star$ within $\XID(\alpha)$, which we union into $\mathcal{S}^{(t+1)}$.

Since the plausible score $s(\bm{x})$ is included as a constraint in the \texttt{Oracle} MILP, it should admit a corresponding tree-structured MILP encoding. Accordingly, the central questions in PINE can be summarized as follows:

\emph{(i) How should we design a plausible score $s(\bm{x})$ that is easily embeddable in the MILP? (ii) How should we choose the threshold $\tau(\alpha)$ that separates in-/out-of-distribution inputs?}

In \cref{subsec:pine-implementation}, we address (i) by adopting a Chow-Liu tree.
In \cref{subsec:pine-calibration}, we address (ii) by determining $\tau(\alpha)$ via the split conformal framework introduced in \cref{subsec:split-conformal}.
Finally, in \cref{subsec:theoretical-analysis}, we theoretically analyze how in-distribution prediction equivalence based on the Chow-Liu constraint affects compression and fidelity.

\subsection{Chow-Liu Tree as a Plausible Score}
\label{subsec:pine-implementation}
We adopt the negative log-likelihood (NLL) of a Chow-Liu tree as the plausible score in PINE.
Since PINE restricts the Oracle search region to the in-distribution region $\XID(\alpha)$ defined by $s(\bm{x})\le\tau(\alpha)$, it is desirable to use a score that captures the data distribution well while remaining computationally efficient and linearly embeddable in a MILP.
Chow-Liu trees satisfy these requirements in terms of both distributional expressiveness and computational efficiency.
More generally, PINE can incorporate tree-structured distribution constraints; in \cref{sec:appendix-scores} we provide two alternative plausible scores and compare evaluation metrics.

A Chow-Liu tree~\cite{ChowLiu1968DependenceTrees} is a probabilistic model that approximates the dependency structure among features with a tree.
It efficiently approximates the input distribution by maximizing the empirical likelihood within the class of tree-structured distributions. 
Equivalently, it is obtained by taking the maximum spanning tree over mutual information estimated from $\Dfit$.

We now define the Chow-Liu-based plausible score.
First, we discretize each continuous feature into $B$ bins and denote the resulting discretized input vector by $\tilde{\bm{x}}=(\tilde{x}_1,\dots,\tilde{x}_p)\in\{1,\dots,B\}^p$, where $\tilde{x}_j$ is the bin index of feature $j$.
The Chow-Liu tree factorizes the joint distribution of $\tilde{\bm{x}}$ as
\begin{equation}
  p_{\mathrm{CL}}(\tilde{\bm{x}})
  = p(\tilde{x}_r)\prod_{j\neq r} p(\tilde{x}_j\mid \tilde{x}_{\mathrm{pa}(j)}),
  \label{eq:cl-factorization}
\end{equation}
where $r$ is an arbitrarily chosen root node and $\mathrm{pa}(j)$ denotes the parent of node $j$ in the rooted tree obtained by orienting edges away from $r$.
We estimate $p(\tilde{x}_r)$ and $p(\tilde{x}_j\mid\tilde{x}_{\mathrm{pa}(j)})$ from $\Dfit$, apply a small pseudo-count to handle unseen bins, and renormalize each table so that $p_{\mathrm{CL}}$ is a valid probability distribution.
Based on this model, we define the plausible score of input $\bm{x}$ as
\begin{equation}
  s(\bm{x}) := s_{\mathrm{CL}}(\tilde{\bm{x}}) := -\log p_{\mathrm{CL}}(\tilde{\bm{x}}).
  \label{eq:cl-score}
\end{equation}
Smaller values of $s_{\mathrm{CL}}(\tilde{\bm{x}})$ correspond to higher likelihood, i.e., inputs that are more likely under the data distribution $P_0$.
Therefore, the condition $s_{\mathrm{CL}}(\tilde{\bm{x}})\le\tau(\alpha)$ can be interpreted as a likelihood-based in-distribution constraint.

An advantage of Chow-Liu trees is that taking the negative log of \cref{eq:cl-factorization} decomposes the score in \cref{eq:cl-score} directly into a root-marginal term and conditional terms on the Chow-Liu tree edges:
\begin{equation}
  s_{\mathrm{CL}}(\tilde{\bm{x}})
  = \underbrace{-\log p(\tilde{x}_r)}_{\text{root-dependent term}}
  + \underbrace{\sum_{j\neq r} -\log p(\tilde{x}_j\mid \tilde{x}_{\mathrm{pa}(j)})}_{\text{edge-dependent terms}}.
  \label{eq:cl-score-decomp}
\end{equation}
This additive structure admits a compact MILP encoding of the in-distribution constraint $s_{\mathrm{CL}}(\tilde{\bm{x}})\le\tau(\alpha)$, requiring only a polynomial number of binary variables and linear constraints.
In particular, with $p$ features and $B$ discretization bins per feature, the root term contributes a single-bin lookup and each edge term a parent-child bin-pair lookup, so the MILP encoding uses $\mathcal{O}(pB)$ binary variables to represent per-feature bin indicators and $\mathcal{O}(|E|B^{2})=\mathcal{O}(pB^{2})$ binary variables to represent bin pairs on edges.
These counts are substantially smaller than the exponential size $\mathcal{O}(B^p)$ of the discretized input space, because the Oracle introduces bin-pair variables only for Chow-Liu edges rather than variables for all joint assignments.

We describe a concrete MILP encoding.
Let $q_{i,b}\in\{0,1\}$ indicate that feature $i$ falls into bin $b$, and enforce $\sum_{b=1}^B q_{i,b}=1$ for each $i$.
For each non-root node $j$ with parent $i=\mathrm{pa}(j)$, let $u_{i,j,b,b'}\in\{0,1\}$ indicate the conjunction $q_{i,b}=1$ and $q_{j,b'}=1$. We encode this logical AND via
\begin{equation}
  u_{i,j,b,b'} \le q_{i,b},~
  u_{i,j,b,b'} \le q_{j,b'},~
  u_{i,j,b,b'} \ge q_{i,b}+q_{j,b'}-1.
  \label{eq:cl-and-linearization}
\end{equation}
The root-marginal and edge-conditional terms in \cref{eq:cl-score-decomp} can be precomputed as coefficients for root-dependent bins and edge-dependent bin pairs, and $s_{\mathrm{CL}}(\tilde{\bm{x}})$ can be written as a linear sum of $q_{i,b}$ and $u_{i,j,b,b'}$.

In implementation, when discretizing a continuous feature $j$, we round the resulting bin boundaries on each feature axis to the set of split thresholds $\Theta_j$ used by the trained ensemble $\mathcal{T}$ (i.e., all split thresholds used for feature $j$).
This rounding is necessary because the Oracle MILP encodes boundary comparisons for continuous feature $j$ using indicator variables defined on $\Theta_j$.
If some bin boundaries do not lie in $\Theta_j$, the region defined by $s(\bm{x})\le\tau(\alpha)$ can differ from the region feasible in the MILP, and counterexamples in the intended in-distribution region may be missed.
We provide the rounding procedure and complexity in \cref{sec:appendix-chowliu-rounding}.

\subsection{Calibration of the In-Distribution Region}\label{subsec:pine-calibration}
We propose to use split conformal prediction to choose the threshold $\tau(\alpha)$ so that the in-distribution region $\XID(\alpha)$ constructed by PINE contains future inputs (test data in our setting) with a desired coverage level $1-\alpha$.
In the following, we rely on the construction and computation described in \cref{subsec:split-conformal} and summarize only the guarantees needed for PINE.

\begin{assumption}[Exchangeability]
\label[assumption]{ass:exchangeability}
The inputs in the calibration set $\Dcal$ and a future input $\bm{X}_{\mathrm{new}}$
follow the same data distribution $P_0$ and are exchangeable.
\end{assumption}

Under the exchangeability assumption, the split-conformal threshold $\tau(\alpha)$ provides a lower bound of $1-\alpha$ on the probability that a future input lies in the in-distribution region:
\begin{equation}
  \mathbb{P}\!\left[\bm{X}_{\mathrm{new}}\in\XID(\alpha)\right]
  = \mathbb{P}\!\left[s(\bm{X}_{\mathrm{new}})\le \tau(\alpha)\right]
  \ge 1-\alpha.
  \label{eq:xid-coverage}
\end{equation}

\begin{proposition}[Probabilistic predictive equivalence]
\label{prop:prob-equiv}
Under \cref{ass:exchangeability}, the pruned model obtained by PINE provides a lower bound of $1-\alpha$ on the probability of prediction equivalence for a future input.
That is, the following bound holds:
\begin{equation}
  \mathbb{P}\!\left[
    \hat{y}(\bm{X}_{\mathrm{new}}; \bm w)
    =
    \hat{y}(\bm{X}_{\mathrm{new}}; \bm w^{(0)})
  \right]
  \ge 1-\alpha.
  \label{eq:prob-equiv}
\end{equation}
\end{proposition}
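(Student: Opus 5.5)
The argument is an event inclusion followed by monotonicity of probability. Let $E$ be the event that the pruned and original predictions agree at $\bm{X}_{\mathrm{new}}$. The key step is to show $\{\bm{X}_{\mathrm{new}}\in\XID(\alpha)\}\subseteq E$. Then
\[
\mathbb{P}[E]\ge\mathbb{P}[\bm{X}_{\mathrm{new}}\in\XID(\alpha)]\ge 1-\alpha
\]
by \cref{eq:xid-coverage}, which holds under \cref{ass:exchangeability}.

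To establish the inclusion, I would use the termination condition of \cref{alg:pine}. The loop ends only when \texttt{Oracle} returns $\mathcal{S}^\star=\varnothing$. By construction, \texttt{Oracle} is an exact MILP that searches for an $\bm{x}$ with $s(\bm{x})\le\tau(\alpha)$ and $\hat{y}(\bm{x};\bm{w})\neq\hat{y}(\bm{x};\bm{w}^{(0)})$. An empty return therefore certifies that no counterexample exists in $\XID(\alpha)$. In other words, $\hat{y}(\bm{x};\bm{w})=\hat{y}(\bm{x};\bm{w}^{(0)})$ for every deterministic $\bm{x}\in\XID(\alpha)$. Applying this pointwise to the realization of $\bm{X}_{\mathrm{new}}$ gives the inclusion. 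This is valid because $\bm{w}$, $s$ and $\tau(\alpha)$ are fixed functions of $\Dfit$ and $\Dcal$, and the equivalence holds for all such realizations.

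The main obstacle is justifying that the MILP feasible region coincides exactly with $\XID(\alpha)$. Two things must hold for this. First, the Chow-Liu score encoded via $q_{i,b}$ and $u_{i,j,b,b'}$ must equal $s_{\mathrm{CL}}(\tilde{\bm{x}})$; the AND linearization \cref{eq:cl-and-linearization} is exact together with $\sum_b q_{i,b}=1$. Second, the bin boundaries must be aligned with the split thresholds $\Theta_j$, so that every point of $\XID(\alpha)$ corresponds to some feasible MILP assignment with the same leaves and the same bins. I would invoke the rounding discussed in \cref{subsec:pine-implementation} for this, and assume that the \texttt{Oracle}'s tie-breaking matches the smallest-index rule. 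With exactness granted, the remainder is a one-line monotonicity argument.

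Finally, I would note that the probability is over the joint randomness of $\Dcal$ and $\bm{X}_{\mathrm{new}}$, and also over $\Dfit$, on which $s$ depends. The bound \cref{eq:score-coverage} still applies, because $s$ is fixed independently of $\Dcal$ and only exchangeability of $\Dcal$ with $\bm{X}_{\mathrm{new}}$ is used.
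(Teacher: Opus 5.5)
Your proposal is correct and follows the paper's own argument. The paper obtains the bound from exact prediction equivalence on $\XID(\alpha)$ once the Oracle certifies that no counterexample exists, i.e.\ the inclusion $\{\bm{X}_{\mathrm{new}}\in\XID(\alpha)\}\subseteq\{\hat{y}(\bm{X}_{\mathrm{new}};\bm w)=\hat{y}(\bm{X}_{\mathrm{new}};\bm w^{(0)})\}$, combined with the split-conformal coverage \cref{eq:xid-coverage}; the paper simply assumes every MILP is solved to certified optimality/infeasibility, whereas you additionally sketch why the Chow-Liu encoding, threshold-aligned binning and tie-breaking make the Oracle's feasible region match $\XID(\alpha)$, which is a welcome but non-essential elaboration.
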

This guarantee is a direct consequence of the split conformal guarantee $\mathbb{P}[\bm{X}_{\mathrm{new}}\in\XID(\alpha)]\ge 1-\alpha$ together with exact prediction equivalence on $\XID(\alpha)$.
The probability in \cref{eq:prob-equiv} is marginal over the calibration set and the future input under exchangeability.
Moreover, the guarantee assumes that each Oracle MILP is solved to certified optimality/infeasibility; with time limits, the absence of a found counterexample is not a certificate of equivalence.

\subsection{Theoretical Analysis}
\label{subsec:theoretical-analysis}
In this section, we clarify how restricting the equivalence guarantee region from the entire input space $\X$ to the in-distribution region $\XID(\alpha)\subseteq\X$ affects compression and computational efficiency.
We show that, when combined with the Chow-Liu distribution constraint, the number of discrete states that \texttt{Oracle} can explore is explicitly upper bounded in terms of $\tau(\alpha)$, and this upper bound changes exponentially through $\tau(\alpha)$.

First, since $\XID(\alpha)\subseteq\X$, restricting guarantees to the in-distribution region relaxes constraints and thus enlarges the feasible set; therefore, the $L_0$ objective value $\|\bm{w}\|_0$ cannot worsen. In particular, when the solver finds an optimal solution, PINE cannot be worse than FIPE in terms of $\|\bm{w}\|_0$.
Second, when requiring prediction equivalence over the entire input space $\X$ as in FIPE, the number of cells induced by the tree ensemble partition can grow rapidly in high dimensions. For example, let $\Theta_j$ denote the set of split thresholds used across the ensemble for feature $j\in\{1,\dots,p\}$. Since axis $j$ is partitioned into at most $|\Theta_j|+1$ intervals, the input space $\X$ is partitioned into at most $\prod_{j=1}^p(|\Theta_j|+1)$ cells.

In contrast, under the Chow-Liu distribution constraint, writing the discretized input as $\tilde{\bm{x}}\in\{1,\dots,B\}^p$, the \texttt{Oracle} search is restricted to the discrete state set
\begin{equation}
  A_\tau := \{\tilde{\bm{x}} \mid -\log p_{\mathrm{CL}}(\tilde{\bm{x}})\le \tau\}.
  \label{eq:atau-def}
\end{equation}
The following proposition shows that, for any threshold $\tau>0$, the cardinality of $A_\tau$ is upper bounded by $e^{\tau}$.
\begin{proposition}[Upper bound on in-distribution states under the Chow-Liu constraint]
\label{prop:cl-state-bound}
Let $p_{\mathrm{CL}}(\tilde{\bm{x}})$ denote the Chow-Liu distribution defined in \cref{subsec:pine-implementation}, viewed as a probability distribution over discretized inputs $\tilde{\bm{x}}\in\{1,\ldots,B\}^p$.
For a threshold $\tau>0$, define $A_\tau$ as in \cref{eq:atau-def}. Then the number of elements in $A_\tau$ satisfies $|A_\tau|\le e^{\tau}$.
\end{proposition}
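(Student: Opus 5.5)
The argument is a Markov-type counting bound: it uses only that $p_{\mathrm{CL}}$ is a probability distribution on the finite set $\{1,\dots,B\}^p$. The Chow-Liu factorization in \cref{eq:cl-factorization} matters only to certify this fact. The construction in \cref{subsec:pine-implementation} estimates each table $p(\tilde{x}_r)$ and $p(\tilde{x}_j\mid\tilde{x}_{\mathrm{pa}(j)})$, adds a pseudo-count, and renormalizes. Hence every factor is nonnegative, and each conditional table sums to one over $\tilde{x}_j$ for every fixed parent value.

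The first step verifies normalization: $\sum_{\tilde{\bm{x}}} p_{\mathrm{CL}}(\tilde{\bm{x}})=1$. We sum out the variables in reverse topological order of the rooted Chow-Liu tree, leaves first. When summing over a leaf $j$, only the factor $p(\tilde{x}_j\mid\tilde{x}_{\mathrm{pa}(j)})$ depends on $\tilde{x}_j$, and it sums to $1$. Induction on the number of nodes leaves $\sum_{\tilde{x}_r}p(\tilde{x}_r)=1$. This is the only place where care is needed, and it is routine. The pseudo-count and renormalization guarantee that every table is a valid distribution, even for unseen parent bins.

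The second step is the key inequality. Membership $\tilde{\bm{x}}\in A_\tau$ means $-\log p_{\mathrm{CL}}(\tilde{\bm{x}})\le\tau$, which is equivalent to $p_{\mathrm{CL}}(\tilde{\bm{x}})\ge e^{-\tau}$. Therefore
\begin{equation*}
  1=\sum_{\tilde{\bm{x}}\in\{1,\dots,B\}^p} p_{\mathrm{CL}}(\tilde{\bm{x}})
  \ge \sum_{\tilde{\bm{x}}\in A_\tau} p_{\mathrm{CL}}(\tilde{\bm{x}})
  \ge |A_\tau|\,e^{-\tau},
\end{equation*}
which gives $|A_\tau|\le e^{\tau}$.

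Three remarks close the plan. First, the assumption $\tau>0$ is not actually needed for the inequality. It only ensures the bound is nontrivial, since for $\tau\le 0$ the set $A_\tau$ contains at most a point of mass one. Second, $A_\tau$ is finite automatically, because the state space is finite. Third, no step is a real obstacle. If anything requires attention, it is stating explicitly that the smoothed and renormalized tables yield exact normalization, because the bound would fail for an unnormalized score.
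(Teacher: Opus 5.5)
Your proposal is correct and matches the paper's proof: both use $p_{\mathrm{CL}}(\tilde{\bm{x}})\ge e^{-\tau}$ on $A_\tau$ together with total mass one to obtain $1\ge |A_\tau|e^{-\tau}$. The paper takes normalization directly from the construction, so your leaves-first summation check is an extra detail, and it is sound.
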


\begin{proof}
For any $\tilde{\bm{x}}\in A_\tau$, by definition we have $p_{\mathrm{CL}}(\tilde{\bm{x}})\ge e^{-\tau}$.
Since $p_{\mathrm{CL}}$ is a probability distribution, $1\ge \sum_{\tilde{\bm{x}}\in A_\tau} p_{\mathrm{CL}}(\tilde{\bm{x}})\ge |A_\tau|e^{-\tau}$, which implies $|A_\tau|\le e^{\tau}$.
\end{proof}

Therefore, setting $\tau=\tau(\alpha)$ yields $|A_{\tau(\alpha)}|\le e^{\tau(\alpha)}$.
Moreover, since the split-conformal threshold $\tau(\alpha)$ is monotone non-increasing in $\alpha$, increasing $\alpha$ monotonically non-increases the upper bound $e^{\tau(\alpha)}$ on the number of states that \texttt{Oracle} may need to search, potentially making counterexample search easier.
Since the full discrete space has size $B^p$, an effective bound is $\min(B^p,e^{\tau(\alpha)})$; the Chow-Liu bound is most informative when $e^{\tau(\alpha)}\ll B^p$.

\section{Experiments}\label{sec:experiments}
We compare PINE with existing tree ensemble pruning methods on public tabular classification datasets.
Our experiments are designed to answer the following research questions:
(RQ1) Can PINE achieve a better fidelity-pruning trade-off than existing methods?
(RQ2) Does the miscoverage level $\alpha$ consistently control the size of the in-distribution region $\XID(\alpha)$ and the trade-off between pruning rate and fidelity?
(RQ3) Do pruning methods without prediction equivalence guarantees still change predictions even when evaluation is restricted to the in-distribution region $\XID(\alpha)$ defined by PINE?

\begin{figure*}[t]
  \centering
  \begin{subfigure}{\linewidth}
  \centering
    \includegraphics[width=0.6\linewidth]{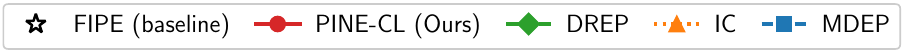}
  \end{subfigure}
  \vspace{0.5ex}
  \begin{subfigure}{0.49\linewidth}
    \centering
    \includegraphics[width=\linewidth]{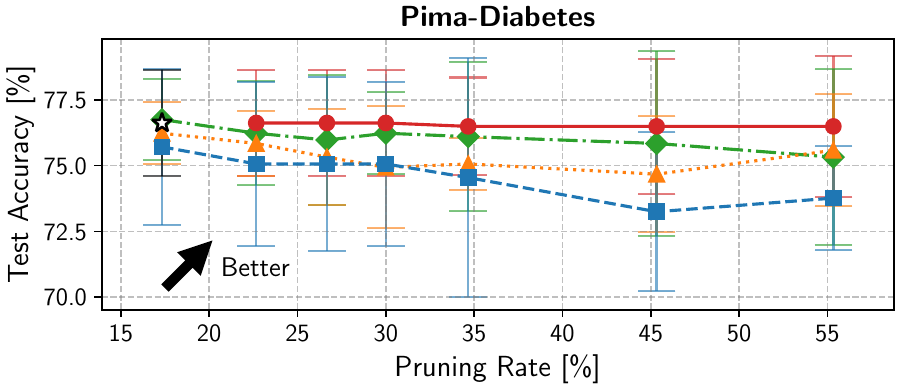}
    \caption{Test Accuracy - Pruning Rate.}
    \label{fig:tradeoff-acc}
  \end{subfigure}
  \hfill
  \begin{subfigure}{0.49\linewidth}
    \centering
    \includegraphics[width=\linewidth]{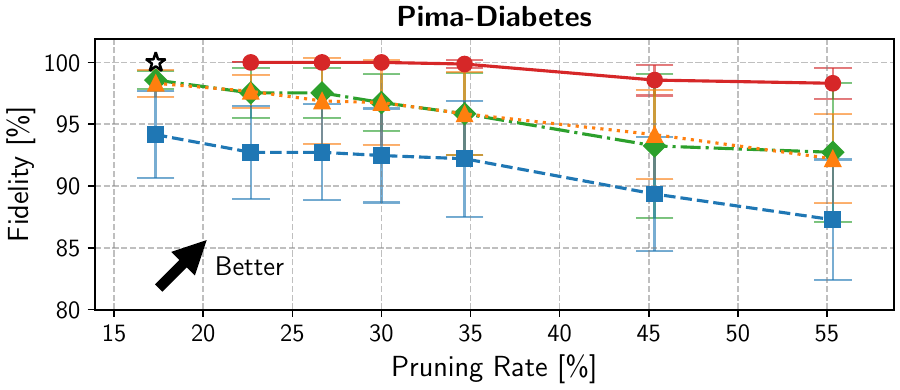}
    \caption{Fidelity - Pruning Rate.}
    \label{fig:tradeoff-fid}
  \end{subfigure}

\caption{(RQ1) Comparison of test accuracy and fidelity on the Pima-Diabetes dataset.
  (a) Existing pruning methods, including FIPE, can maintain test accuracy comparable to the original model after pruning.
  (b) However, fidelity can differ substantially even when accuracies are similar. For existing methods, fidelity tends to decrease monotonically as the pruning rate increases, whereas PINE maintains both high pruning rates and high fidelity.}
  \label{fig:tradeoff}
\end{figure*}

\subsection{Experimental Setup}
\paragraph{Datasets and splits.}
As shown in \cref{tab:datasets}, we use 12 public tabular classification datasets with varying numbers of samples, features, and classes, obtained from public benchmark sources including the UCI repository~\cite{Kelly2023UCI} and OpenML~\cite{Vanschoren2014OpenML}.
For each dataset, we split the data into three sets: a fit set $\Dfit$ used to train the original ensemble and to estimate the score function $s(\cdot)$, a calibration set $\Dcal$ used to calibrate the threshold, and a test set $\Dtest$ used only for final evaluation.
To ensure a fair comparison, we use identical splits and random seeds for all methods and report results averaged over five seeds; details are provided in \cref{sec:appendix-experiments}.
For the $\alpha$ sweep, we reuse the same calibration split for all $\alpha$ (fixed calibration seed) and only recompute $\tau(\alpha)$.

\paragraph{Baselines.}
We compare PINE with existing pruning baselines: FIPE and three PyPruning methods, IC, DREP, and MDEP. 
FIPE~\cite{Emine2025FIPE} is the faithful pruning baseline, which certifies prediction equivalence over the entire input space $\X$.
Following the experimental setup of FIPE, we use IC~\cite{Lu2010EPIC}, DREP~\cite{Li2012DREP}, and MDEP~\cite{Guo2018MDEP} from the PyPruning library~\cite{buschjaeger2021} as accuracy-oriented tree-level pruning baselines. 
These methods prune by specifying the number of trees retained after pruning.
We further report results with ForestPrune~\cite{LiuMazumder2023ForestPrune} and the remaining PyPruning methods in \cref{subsec:appendix-forestprune,subsec:appendix-all-pypruning}. 
We exclude LOP~\cite{Devos2025LOP}, whose structural decisions require a new method-specific Oracle constraint encoding, and Born-Again Tree Ensembles~\cite{Vidal2020BornAgainTrees}, which distill the ensemble into a single tree and change the model class.

\paragraph{Hyperparameters.}
For the ensemble to be pruned, we follow prior work and train boosted tree ensembles using XGBoost~\cite{Chen2016XGBoost}.
We fix the maximum depth to $D=2$ and the number of trees to $M=30$.
For PINE, we sweep the miscoverage level $\alpha\in\{0.05,0.1,0.2,0.4,0.6,0.8\}$ and calibrate the threshold $\tau(\alpha)$ on $\Dcal$ via split conformal prediction for each $\alpha$.
For the Chow-Liu-based score, we fix the number of discretization bins for continuous features to $B=4$.
Sensitivity to the number of bins is reported in \cref{sec:appendix-chowliu-bins-sensitivity}.
FIPE has no additional hyperparameters.
Since IC/DREP/MDEP require a target number of trees, for each dataset and seed we set the target to the number of trees output by FIPE and by PINE (for each $\alpha$).
Unless otherwise noted, we use default values for non-essential hyperparameters as summarized in \cref{sec:appendix-experiments}.

\paragraph{Metrics.}
We report test accuracy, fidelity, and pruning rate as evaluation metrics.
The fidelity $\hat{\rho}$ on the test set $\Dtest$ is defined as the fraction of inputs for which the pruned ensemble matches the original ensemble:
\begin{equation}
  \hat{\rho}
  := \frac{1}{|\Dtest|}
  \sum_{\bm{x}\in\Dtest}
  \ind\!\left[\hat{y}(\bm{x};\bm{w})
  =\hat{y}(\bm{x};\bm{w}^{(0)})\right].
  \label{eq:overall-fidelity}
\end{equation}
We also define the pruning rate as the fraction of removed trees relative to the original number of trees $M$, i.e., $1 - \|\bm{w}\|_0 / M$; equivalently, the tree-count compression ratio is $M/\|\bm{w}\|_0$ (which we refer to as the compression ratio), where larger values indicate more compression.
To validate the in-distribution region $\XID(\alpha)$ on which prediction equivalence is guaranteed, we additionally measure the empirical test coverage $\hat{\pi}_{\mathrm{ID}}$ and the conditional fidelity $\hat{\rho}_{\mathrm{ID}}$ on $\XID(\alpha)$:
\begin{align}
  \hat{\pi}_{\mathrm{ID}}
  &:= \frac{1}{|\Dtest|}\sum_{\bm{x}\in\Dtest} \ind[\bm{x}\in\XID(\alpha)]
  \label{eq:empirical-coverage}\\
  \hat{\rho}_{\mathrm{ID}}
  &:= \frac{\sum_{\bm{x}\in\Dtest} \ind[\bm{x}\in\XID(\alpha)]\,\ind[\hat{y}(\bm{x};\bm{w})=\hat{y}(\bm{x};\bm{w}^{(0)})]}
       {\sum_{\bm{x}\in\Dtest} \ind[\bm{x}\in\XID(\alpha)]}.
  \label{eq:conditional-fidelity}
\end{align}
If split conformal calibration is appropriate, we expect $\hat{\pi}_{\mathrm{ID}}\approx 1-\alpha$, and if the prediction-equivalence guarantee succeeds we expect $\hat{\rho}_{\mathrm{ID}}\approx 1$.

\paragraph{Optimization.}
Experiments are run on a machine with an Intel Core i7-11800H (8 cores / 16 threads), 2.3GHz CPU and 32GB RAM.
For optimization, we use the commercial solver $\mathtt{Gurobi~v11.0.3}$. All MILP solves in our main experiments terminated with optimality or with a certificate that no feasible solution exists (i.e., no counterexample) before the time limit, so the reported guarantees are certified.
In this section, we report results for the $\|\bm{w}\|_0$ objective.
In \cref{appendix:w1_min}, we additionally report results for an approximation that minimizes $\|\bm{w}\|_1$ to reduce computational cost and additional metrics such as runtime.
\cref{sec:appendix-rf-m50} also reports results on Random Forests and larger XGBoost ensembles ($M=50$), showing similar trends.

\subsection{Results}
We organize the main results by research question.
In particular, we focus on how test accuracy and fidelity behave as a function of pruning rate, and on whether the calibration of the in-distribution region $\XID(\alpha)$ remains valid on the test data $\Dtest$.

\paragraph{(RQ1) Fidelity vs.\ pruning rate.}
In \cref{fig:tradeoff}, we visualize the relationship between pruning rate and (a) test accuracy and (b) fidelity.
As shown in \cref{fig:tradeoff-acc}, existing pruning methods without prediction-equivalence guarantees (IC/DREP/MDEP) can maintain test accuracy comparable to the original model after pruning.
However, \cref{fig:tradeoff-fid} shows that fidelity can differ substantially across methods even at the same pruning rate.
These results highlight that similar accuracy does not necessarily imply prediction equivalence, which directly motivates model compression that preserves decision consistency.

FIPE, which guarantees equivalence over the entire input space, has fidelity 1 by definition, but its pruning rate can be limited because the counterexample search spans the entire input space.
In contrast, PINE focuses the equivalence guarantee on the in-distribution region $\XID(\alpha)$, thereby maintaining high overall fidelity while achieving higher pruning rates than FIPE.
This trend is consistently observed across the 12 datasets: PINE increases mean pruning rate from $44.6\%$ to $67.8\%$ as $\alpha$ increases from $0.05$ to $0.8$, while mean fidelity remains between $99.96\%$ and $99.15\%$ (see \cref{appendix:l0_min}).
\cref{tab:pima-summary} further reports pruning rate, fidelity $\hat{\rho}$, runtime, and number of iterations on Pima-Diabetes.
These results demonstrate that PINE controls the trade-off between pruning rate and fidelity via $\alpha$.

\newcommand{\mindent}{\hspace*{0.8em}}
\newcolumntype{C}[1]{>{\centering\arraybackslash}m{#1}}

\begin{table}[t]
  \centering
  \caption{Comparison of FIPE (baseline) and PINE-CL (ours) on Pima-Diabetes in terms of pruning rate PR (\%), fidelity Fid. (\%), runtime Time (s), and number of iterations Iter. (averaged over 5 seeds). Bold indicates the best value for each metric (higher is better for PR and Fid., lower is better for Time and Iter.).}
  \label{tab:pima-summary}

  \setlength{\tabcolsep}{3pt}
  \renewcommand{\arraystretch}{1.0}

  \begin{tabular*}{\linewidth}{@{\extracolsep{\fill}} l c r r r r @{}}
    \toprule
    Method & $\alpha$ & PR (↑) & Fid. (↑) & Time (↓) & Iter. (↓) \\
    \midrule

    FIPE & -- & 17.3 & \textbf{100.0} & 42.5 & 24.6 \\
    \addlinespace[0.6em]
    \multirow[t]{6}{*}{PINE-CL}
      & 0.05 & 22.7 & \textbf{100.0} & 48.1 & 19.2 \\
      & 0.1 & 26.7 & \textbf{100.0} & 48.0 & 19.6 \\
      & 0.2 & 30.0 & \textbf{100.0} & 47.0 & 19.6 \\
      & 0.4 & 34.7 & 99.9 & 33.8 & 15.2 \\
      & 0.6 & 45.3 & 98.6 & 19.4 & 11.0 \\
      & 0.8 & \textbf{55.3} & 98.3 & \textbf{12.0} & \textbf{7.8} \\

    \bottomrule
  \end{tabular*}
\end{table}

\paragraph{(RQ2) Controlling coverage with $\alpha$.}
We examine whether the miscoverage level $\alpha$ effectively controls the coverage of the guarantee region.
\cref{fig:coverage-vs-alpha} plots the empirical test coverage $\hat{\pi}_{\mathrm{ID}}$ as a function of $\alpha$.
The figure shows that $\hat{\pi}_{\mathrm{ID}}$ closely tracks the target level $1-\alpha$, indicating that split-conformal threshold calibration $\tau(\alpha)$ remains valid on real data.
Therefore, by varying $\alpha$ we can explicitly adjust the size of $\XID(\alpha)$, i.e., how broadly prediction equivalence is guaranteed.

\begin{figure}[t]
    \centering
    \includegraphics[width=0.9\linewidth]{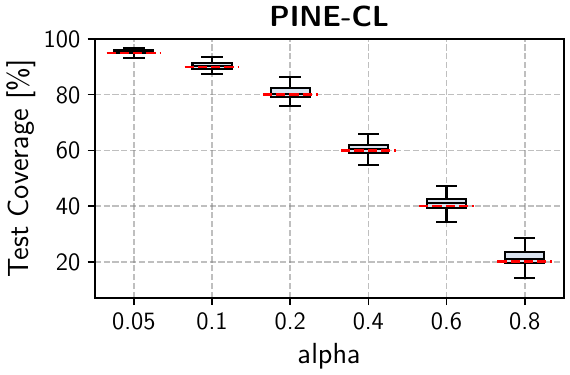}
\caption{(RQ2) Empirical test coverage $\hat{\pi}_{\mathrm{ID}}$ on $\Dtest$ as a function of miscoverage $\alpha$. $\hat{\pi}_{\mathrm{ID}}$ closely tracks the target coverage $1-\alpha$ (red dashed line), indicating that split-conformal threshold calibration $\tau(\alpha)$ is valid.}
    \label{fig:coverage-vs-alpha}
\end{figure}

\begin{figure}[t]
    \centering
    \includegraphics[width=0.8\linewidth]{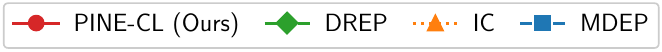}
    
    \includegraphics[width=0.9\linewidth]{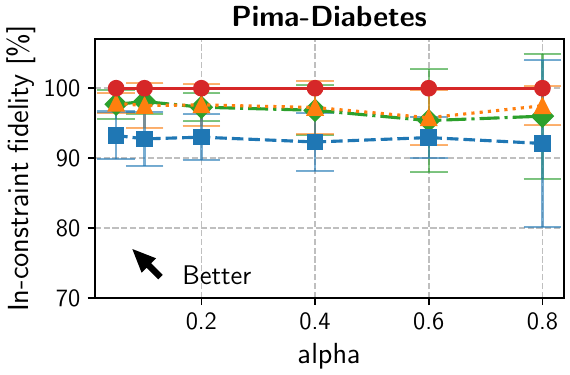}
    \caption{(RQ3) Conditional fidelity $\hat{\rho}_{\mathrm{ID}}$ within $\XID(\alpha)$ on the Pima-Diabetes dataset. Although accuracy-oriented pruning methods (IC/DREP/MDEP) have higher fidelity when evaluation is restricted to the in-distribution region, they can still produce prediction disagreements.}
    \label{fig:id-regressions-pima}
\end{figure}

\paragraph{(RQ3) Fidelity within the in-distribution region.}
We test whether pruning methods without prediction-equivalence guarantees still change predictions even when restricted to the in-distribution region $\XID(\alpha)$ defined by PINE.
Concretely, we fix $\XID(\alpha)$ constructed by PINE and evaluate, for each method, the conditional fidelity $\hat{\rho}_{\mathrm{ID}}$ defined in \cref{eq:conditional-fidelity}.
Observing $\hat{\rho}_{\mathrm{ID}}<1$ implies that prediction changes occur even within the in-distribution region.
Indeed, as shown in \cref{fig:id-regressions-pima}, accuracy-oriented pruning methods (IC/DREP/MDEP) can produce prediction disagreements even within $\XID(\alpha)$.

\section{Discussion}

\subsection{Case Study}
To explain why PINE achieves strong pruning performance, we analyze the counterexamples ignored by PINE on the Adult and COMPAS datasets. 
FIPE must preserve such counterexamples to guarantee prediction equivalence over the entire input space $\X$, whereas PINE can ignore them when they lie outside the calibrated region $\XID(\alpha)$.
Adult contains a point combining \texttt{Preschool} education with \texttt{13.5 years of education} and \texttt{Never-married} status with the \texttt{Wife} relationship role.
COMPAS contains a point for which the Boolean features \texttt{prior offenses=0} and \texttt{prior offenses$>$3} are both true, a logically impossible combination.
These examples are infeasible points; by not enforcing prediction equivalence on such points outside $\XID(\alpha)$, PINE achieves higher compression while preserving decisions within $\XID(\alpha)$.

\subsection{Practical $\alpha$ Selection}
\label{subsec:practical-alpha-selection}
By empirically selecting $\alpha$ post hoc on an additional data split, practitioners can obtain the most compressed PINE model that satisfies a target fidelity level.
We introduce two selection rules for choosing $\alpha$ from a finite grid. 
The first is an empirical selector, which follows the held-out selection principle used in LOP~\cite{Devos2025LOP}: among the candidate $\alpha$ values, it chooses the largest one whose fidelity on $\Dsel$ meets the target fidelity. 
The second is a confidence-bound-based selector, which follows the finite-grid risk-control perspective of Learn-then-Test~\cite{Angelopoulos2025LearnThenTest}: among the candidate $\alpha$ values, it chooses the largest one whose Bonferroni-corrected Clopper-Pearson upper confidence bound on held-out mismatch risk is at most one minus the target fidelity; details are provided in \cref{subsec:appendix-practical-alpha-selection}.

We evaluate these rules using a new four-way split that includes an additional selection set $\Dsel$, separate from $\Dfit$, $\Dcal$, and $\Dtest$.
We then select $\alpha$ according to whether the pruned model satisfies the target fidelity condition on $\Dsel$.
For a 95\% fidelity target, the empirical selector chooses $\alpha=0.95$ on all 12 datasets, achieving $70.8\%$ mean pruning and $98.77\%$ mean test fidelity. 
The confidence-bound-based selector is less aggressive, achieving $57.2\%$ mean pruning and $99.72\%$ mean test fidelity for the same target. 
Note that these test fidelities are empirical outcomes of the selection procedure, not additional conformal guarantees. 
Nevertheless, they show that practitioners can use a held-out selection split to choose $\alpha$ as a practical operating point under a target-fidelity requirement.

\subsection{Scalability Analysis}
To assess scalability of PINE, we conducted additional experiments over varying numbers of trees $M$ and maximum depths $D$.
Specifically, we evaluate PINE at $\alpha=0.8$ over $(M,D)\in\{10,20,30,40,50\}\times\{2,3,4,5\}$.
\Cref{tab:depth-grid} fixes the number of trees at $M=30$ and varies the maximum depth $D$, while \cref{tab:tree-grid} fixes the maximum depth at $D=3$ and varies the number of trees $M$.

The results show that maximum depth $D$ is the key factor limiting scalable tree-level compression. 
In \cref{tab:depth-grid}, increasing $D$ from $2$ to $5$ at $M=30$ decreases the mean pruning rate by $32.50$ pp, while mean fidelity remains near $99\%$.
Over the same depth change, runtime grows from $3.82$ s to $932.75$ s and the number of iterations grows from $2.17$ to $13.17$. 
By contrast, at fixed $D=3$, \cref{tab:tree-grid} shows that increasing $M$ from $10$ to $50$ keeps pruning substantial ($39.17$ to $51.11\%$) and fidelity near $99\%$, although runtime and iterations grow sharply. 
These results suggest that scalability is limited primarily by compression and runtime, rather than by a loss of fidelity.
Deeper trees reduce attainable tree-level compression, while larger ensembles mainly make optimization substantially more expensive. 
This result can be interpreted as a consequence of deeper trees creating more localized decision regions, which makes more trees partially useful and harder to remove entirely under a prediction-equivalence constraint.

\begin{table}[t]
  \centering
  \footnotesize
  \caption{Scaling with maximum depth $D$ for PINE over 12 datasets at $\alpha=0.8$ and $M=30$. PR$\leq 1\%$ counts the number of cases with pruning rate at most 1\%.}
  \label{tab:depth-grid}
  \setlength{\tabcolsep}{2.5pt}
  \begin{tabular*}{\linewidth}{@{\extracolsep{\fill}}lrrrrr@{}}
    \toprule
    Depth & PR (\%) & Fid. (\%) & Time (s) & Iter. & PR$\leq 1\%$ \\
    \midrule
    $D=2$ & 66.94 & 99.25 & 3.82 & 2.17 & 0/12 \\
    $D=3$ & 51.11 & 99.50 & 21.77 & 5.58 & 1/12 \\
    $D=4$ & 38.33 & 98.87 & 180.36 & 11.33 & 3/12 \\
    $D=5$ & 34.44 & 99.32 & 932.75 & 13.17 & 6/12 \\
    \bottomrule
  \end{tabular*}
\end{table}

\begin{table}[t]
  \centering
  \footnotesize
  \caption{Scaling with the number of trees $M$ for PINE over 12 datasets at $\alpha=0.8$ and $D=3$. PR$\leq 1\%$ counts the number of cases with pruning rate at most 1\%.}
  \label{tab:tree-grid}
  \setlength{\tabcolsep}{2.5pt}
  \begin{tabular*}{\linewidth}{@{\extracolsep{\fill}}lrrrrr@{}}
    \toprule
    Trees & PR (\%) & Fid. (\%) & Time (s) & Iter. & PR$\leq 1\%$ \\
    \midrule
    $M=10$ & 39.17 & 99.80 & 1.89 & 1.42 & 0/12 \\
    $M=20$ & 47.92 & 99.36 & 10.67 & 3.50 & 1/12 \\
    $M=30$ & 51.11 & 99.50 & 21.77 & 5.58 & 1/12 \\
    $M=40$ & 50.42 & 99.24 & 355.33 & 9.25 & 1/12 \\
    $M=50$ & 47.00 & 99.27 & 755.46 & 15.83 & 0/12 \\
    \bottomrule
  \end{tabular*}
\end{table}

\section{Conclusion}
This work addresses achieving high pruning rates and high fidelity for decision tree ensembles and proposes PINE, which guarantees prediction equivalence on an in-distribution region.
PINE calibrates this region via split conformal prediction and prunes so predictions match on it.
Under the exchangeability assumption, PINE provides a probabilistic prediction-equivalence guarantee: with probability at least $1-\alpha$, predictions on a future input match those of the original model.

In experiments, accuracy-oriented pruning methods can maintain test accuracy while fidelity varies substantially.
PINE maintains high fidelity and achieves higher pruning rates than faithful pruning methods that enforce equivalence over the entire input space by avoiding restrictive out-of-distribution constraints.

One limitation is that our guarantee relies on the validity of split conformal calibration; under distribution shift, the guarantee may weaken.
Addressing this will require combining recalibration when distribution shift is detected~\cite{Tibshirani2019ConformalCovariateShift,GibbsCandes2021AdaptiveConformalShift} with mechanisms to monitor the guarantee region.
Moreover, because our framework solves counterexample search as a MILP, computational cost can increase, especially in multi-class settings; thus, improving scalability to larger ensembles is an important direction.
Overall, improving robustness via recalibration and monitoring and enhancing computational efficiency via better Oracle design are key steps toward broader deployment of pruning methods with guarantees.

\clearpage

\section*{Impact Statement}
Our work aims to make post-hoc compression of tree ensembles more reliable by providing verifiable prediction-equivalence guarantees on a calibrated in-distribution region.
The guarantee can reduce inference cost while preserving decision consistency in settings where model behavior must be audited or verified.
However, the guarantee relies on exchangeability and on solving the underlying MILPs to certified optimality; under distribution shift or solver time limits, the guarantee may weaken and could be misinterpreted as an unconditional safety guarantee beyond the stated assumptions.
We encourage practitioners to monitor for distribution shift, report solver statuses, and validate guarantees in the intended deployment setting.

\bibliography{references}
\bibliographystyle{icml2026}

\newpage
\appendix
\onecolumn

\section{Experimental Details}\label{sec:appendix-experiments}
We summarize the experimental details described in \cref{sec:experiments}.

\begin{table}[h]
  \caption{Dataset statistics used in the experiments.}
  \label{tab:datasets}
  \centering
  \begin{tabular}{lrrr}
    \toprule
    Dataset & \#Samples & \#Features & \#Classes \\
    \midrule
    Adult & 32561 & 14 & 2 \\
    Balance-Scale & 625 & 4 & 3 \\
    Breast-Cancer-Wisconsin & 683 & 9 & 2 \\
    COMPAS-ProPublica & 6907 & 12 & 2 \\
    ELEC2 & 38474 & 7 & 2 \\
    FICO & 10459 & 17 & 2 \\
    HTRU2 & 17898 & 8 & 2 \\
    JM1 & 10885 & 21 & 2 \\
    Pima-Diabetes & 768 & 8 & 2 \\
    PoL & 10082 & 26 & 2 \\
    Seeds & 210 & 7 & 3 \\
    Spambase & 4601 & 57 & 2 \\
    \bottomrule
  \end{tabular}
\end{table}

\paragraph{Data splits.}
We use a three-way split: 64\% of the full data is used as the fit set $\Dfit$, 16\% as the calibration set $\Dcal$, and 20\% as the test set $\Dtest$.
Thus, the final fit/calibration/test proportions are $64\!:\!16\!:\!20$.
We reuse the same split across methods and report results averaged over five seeds.

\paragraph{MILP configuration.}
We use $\mathtt{Gurobi~v11.0.3}$ with a per-call time limit of 120 seconds, one thread, and at most 10{,}000 Oracle calls.
Theoretical guarantees apply when the MILPs are solved to certified optimality/infeasibility.

\paragraph{Plausible-score hyperparameters.}
Unless stated otherwise, Chow-Liu and Leaf Support use Laplace smoothing $\beta=1.0$; Isolation Forest uses $K=30$ trees with $\texttt{max\_samples}=256$ (see \cref{sec:appendix-scores} for Leaf Support/Isolation Forest definitions).

\paragraph{Code Availability.}
The implementation and experiment scripts are available at \url{https://github.com/Haruk1y/pine}.

\paragraph{Baseline implementations.}
For baseline implementations, we used the public repositories for FIPE (\url{https://github.com/eminyous/fipe}), PyPruning (\url{https://github.com/sbuschjaeger/PyPruning}), and ForestPrune (\url{https://github.com/mazumder-lab/ForestPrune}).

\clearpage

\section{Detailed Experimental Results}\label{sec:appendix-results}

\subsection{Results for $L_0$ Minimization}\label{appendix:l0_min}
For the $\|\bm{w}\|_0$ objective, we show the trade-off curves over the 12 datasets in the following panels.
\Cref{tab:dataset-summary} summarizes the per-dataset pruning rate, fidelity, and runtime.
\begin{figure}[H]
  \centering
  \includegraphics[width=0.65\linewidth]{figures/legend/legend_fipe_cl_pypruning.pdf}

  \includegraphics[width=\linewidth]{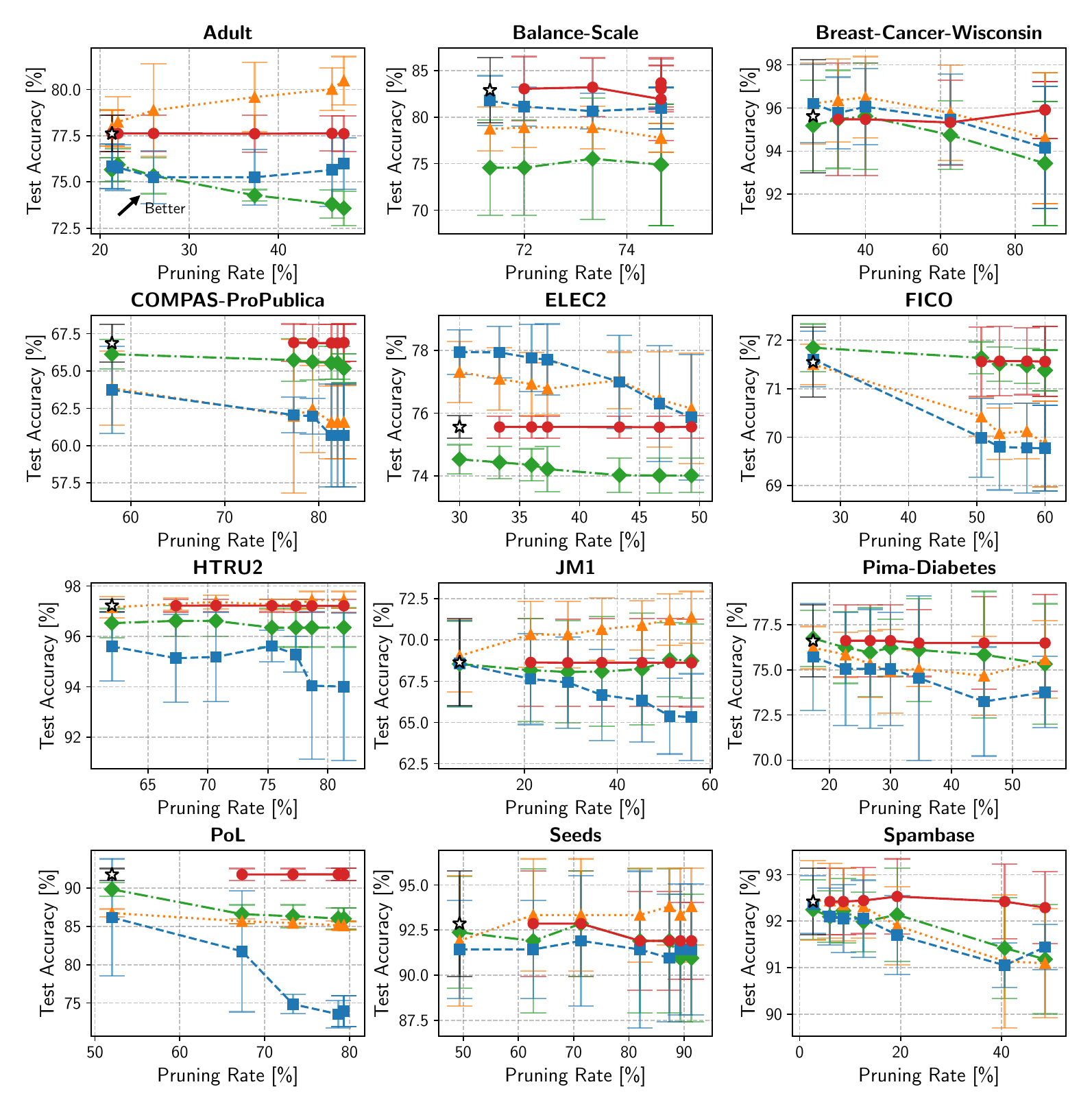}
  \caption{($L_0$) Test Accuracy - Pruning Rate (12 datasets).}
  \label{fig:appendix-l0-acc-panels}
\end{figure}

\begin{figure}[t]
  \centering
  \includegraphics[width=0.65\linewidth]{figures/legend/legend_fipe_cl_pypruning.pdf}
  
  \includegraphics[width=\linewidth]{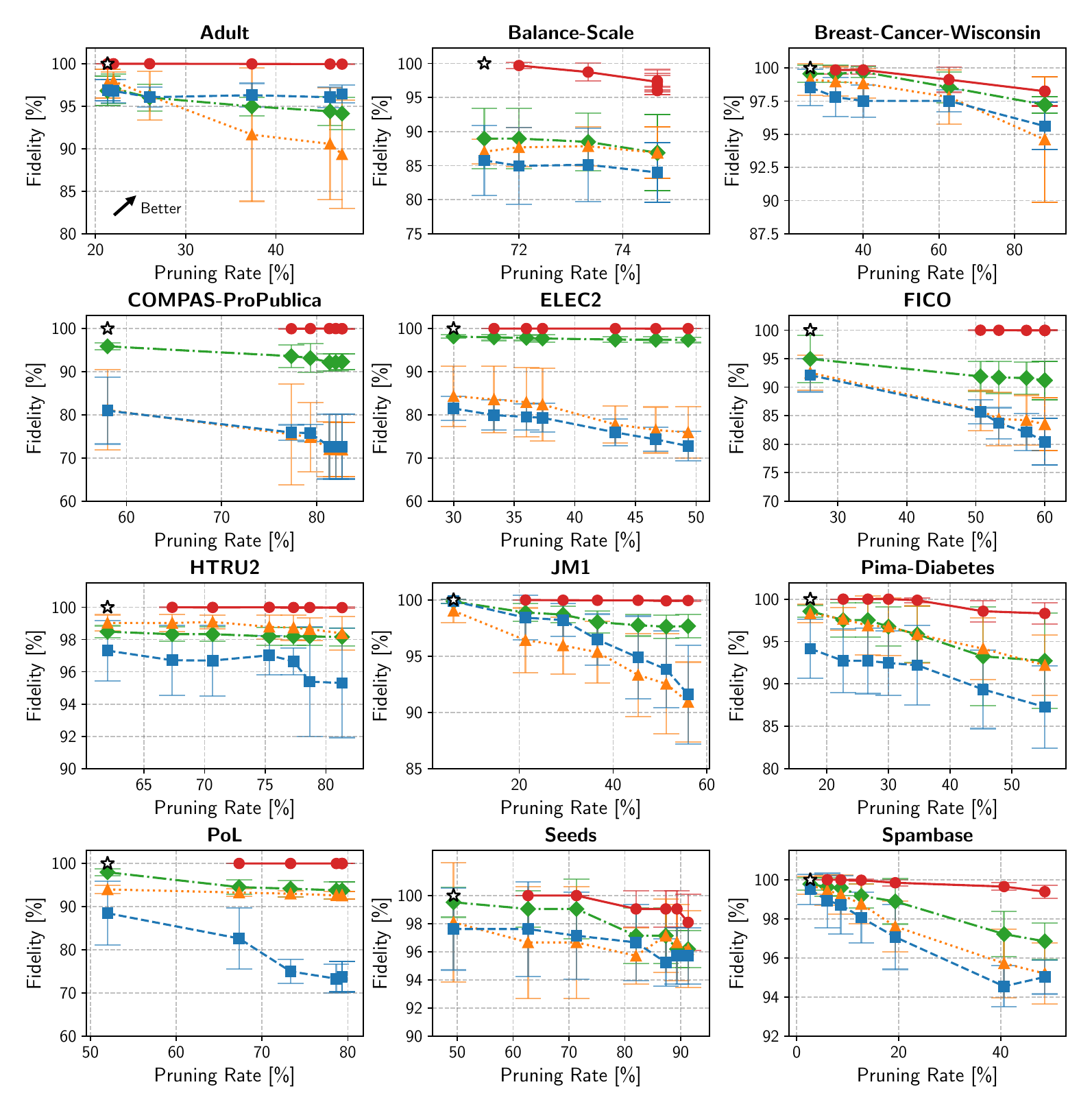}
  \caption{($L_0$) Fidelity - Pruning Rate (12 datasets).}
  \label{fig:appendix-l0-fid-panels}
\end{figure}

\begin{figure}[t]
  \centering
  \includegraphics[width=0.5\linewidth]{figures/legend/legend_cl_pypruning.pdf}
  
  \includegraphics[width=\linewidth]{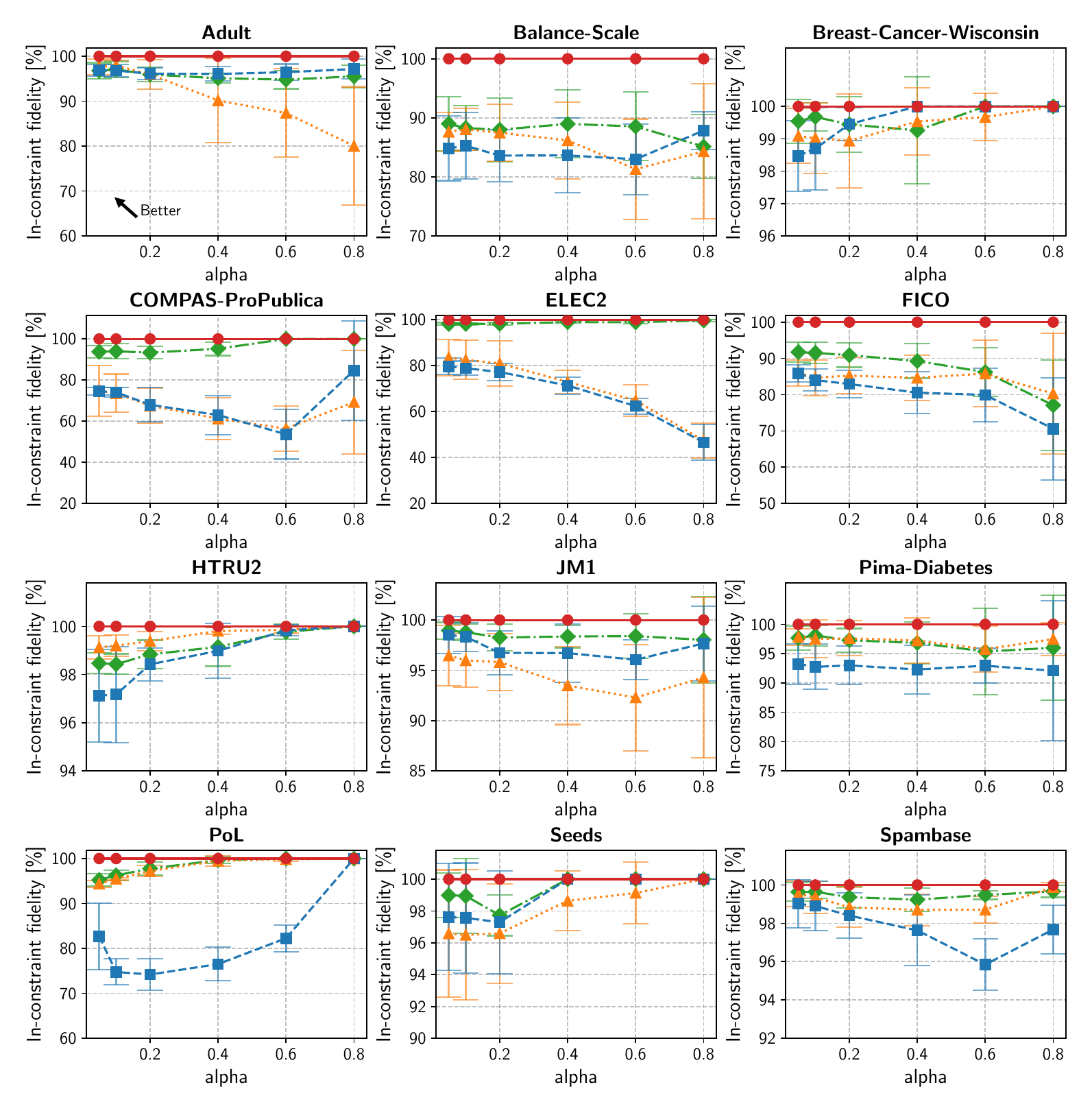}
  \caption{($L_0$) In-constraint Fidelity - alpha (12 datasets).}
  \label{fig:appendix-l0-inconstraint-panels}
\end{figure}

\begin{table*}[t]
  \centering
  \caption{Per-dataset pruning rate, fidelity, and runtime for FIPE (Baseline) and PINE-CL (Ours) across $\alpha$ (mean $\pm$ std over 5 seeds). Best values in each row are boldfaced. PINE columns correspond to $\alpha\in\{0.05,0.1,0.2,0.4,0.6,0.8\}$. Dataset rows use short names (Balance=Balance-Scale, Breast=Breast-Cancer-Wisconsin, COMPAS=COMPAS-ProPublica, Pima=Pima-Diabetes).}
  \label{tab:dataset-summary}

  \setlength{\tabcolsep}{2pt}
  \renewcommand{\arraystretch}{1.05}

  \begin{tabular*}{\textwidth}{@{\extracolsep{\fill}} l l r !{\hspace{1.5em}} *{6}{r} @{}}
    \toprule
    Dataset & Metric & \multicolumn{1}{c}{FIPE (Baseline)} & \multicolumn{6}{c}{PINE-CL (Ours)} \\
    \cmidrule(lr){4-9}
    & & & \multicolumn{1}{c}{$\alpha=0.05$} & \multicolumn{1}{c}{$\alpha=0.1$} & \multicolumn{1}{c}{$\alpha=0.2$} & \multicolumn{1}{c}{$\alpha=0.4$} & \multicolumn{1}{c}{$\alpha=0.6$} & \multicolumn{1}{c}{$\alpha=0.8$} \\
    \midrule

    Adult & Pruning Rate & 21.3$\pm$3.4 & 21.3$\pm$3.4 & 22.0$\pm$4.5 & 26.0$\pm$4.9 & 37.3$\pm$6.5 & 46.0$\pm$3.9 & \textbf{47.3$\pm$3.9} \\
      & Fidelity & \textbf{100.0$\pm$0.0} & \textbf{100.0$\pm$0.0} & \textbf{100.0$\pm$0.0} & \textbf{100.0$\pm$0.0} & \textbf{100.0$\pm$0.0} & \textbf{100.0$\pm$0.0} & \textbf{100.0$\pm$0.0} \\
      & Time & 74.7$\pm$17.6 & 252.8$\pm$37.6 & 242.0$\pm$37.9 & 185.2$\pm$13.9 & 106.2$\pm$30.2 & 31.9$\pm$10.0 & \textbf{9.3$\pm$3.5} \\
    \addlinespace[0.25em]
    Balance & Pruning Rate & 71.3$\pm$2.7 & 72.0$\pm$2.7 & 73.3$\pm$2.1 & \textbf{74.7$\pm$2.7} & \textbf{74.7$\pm$2.7} & \textbf{74.7$\pm$2.7} & \textbf{74.7$\pm$2.7} \\
      & Fidelity & \textbf{100.0$\pm$0.0} & 99.7$\pm$0.4 & 98.7$\pm$1.2 & 97.3$\pm$1.1 & 97.4$\pm$1.5 & 96.8$\pm$1.2 & 96.0$\pm$0.5 \\
      & Time & 43.2$\pm$15.2 & 51.3$\pm$23.1 & 39.5$\pm$27.2 & 27.7$\pm$16.1 & 16.4$\pm$6.7 & 13.5$\pm$5.5 & \textbf{11.7$\pm$5.5} \\
    \addlinespace[0.25em]
    Breast & Pruning Rate & 26.0$\pm$6.5 & 32.7$\pm$6.8 & 40.0$\pm$10.5 & 62.7$\pm$18.3 & \textbf{88.0$\pm$1.6} & \textbf{88.0$\pm$1.6} & \textbf{88.0$\pm$1.6} \\
      & Fidelity & \textbf{100.0$\pm$0.0} & 99.9$\pm$0.3 & 99.9$\pm$0.3 & 99.1$\pm$0.9 & 98.2$\pm$1.0 & 98.2$\pm$1.0 & 98.2$\pm$1.0 \\
      & Time & 45.2$\pm$11.7 & 44.0$\pm$11.2 & 34.0$\pm$11.7 & 21.4$\pm$15.2 & 0.7$\pm$0.2 & \textbf{0.5$\pm$0.2} & \textbf{0.5$\pm$0.2} \\
    \addlinespace[0.25em]
    COMPAS & Pruning Rate & 58.0$\pm$4.5 & 77.3$\pm$1.3 & 79.3$\pm$1.3 & 81.3$\pm$1.6 & 82.0$\pm$1.6 & \textbf{82.7$\pm$2.5} & \textbf{82.7$\pm$2.5} \\
      & Fidelity & \textbf{100.0$\pm$0.0} & \textbf{100.0$\pm$0.1} & \textbf{100.0$\pm$0.0} & \textbf{100.0$\pm$0.0} & \textbf{100.0$\pm$0.0} & \textbf{100.0$\pm$0.1} & \textbf{100.0$\pm$0.1} \\
      & Time & 12.2$\pm$2.0 & 1.7$\pm$0.3 & 1.4$\pm$0.6 & 0.6$\pm$0.3 & 0.5$\pm$0.1 & \textbf{0.3$\pm$0.0} & \textbf{0.3$\pm$0.0} \\
    \addlinespace[0.25em]
    ELEC2 & Pruning Rate & 30.0$\pm$3.7 & 33.3$\pm$6.7 & 36.0$\pm$6.5 & 37.3$\pm$8.0 & 43.3$\pm$7.0 & 46.7$\pm$4.7 & \textbf{49.3$\pm$3.3} \\
      & Fidelity & \textbf{100.0$\pm$0.0} & \textbf{100.0$\pm$0.0} & \textbf{100.0$\pm$0.0} & \textbf{100.0$\pm$0.0} & \textbf{100.0$\pm$0.0} & \textbf{100.0$\pm$0.0} & \textbf{100.0$\pm$0.0} \\
      & Time & 45.2$\pm$11.1 & 39.6$\pm$14.3 & 35.1$\pm$11.7 & 25.6$\pm$9.5 & 16.6$\pm$9.5 & 7.0$\pm$1.5 & \textbf{2.8$\pm$0.5} \\
    \addlinespace[0.25em]
    FICO & Pruning Rate & 26.0$\pm$3.9 & 50.7$\pm$3.9 & 53.3$\pm$4.2 & 57.3$\pm$3.3 & \textbf{60.0$\pm$3.0} & \textbf{60.0$\pm$3.0} & \textbf{60.0$\pm$3.0} \\
      & Fidelity & \textbf{100.0$\pm$0.0} & \textbf{100.0$\pm$0.0} & \textbf{100.0$\pm$0.0} & \textbf{100.0$\pm$0.0} & \textbf{100.0$\pm$0.1} & \textbf{100.0$\pm$0.1} & \textbf{100.0$\pm$0.1} \\
      & Time & 96.1$\pm$17.2 & 30.3$\pm$8.6 & 28.8$\pm$10.2 & 12.8$\pm$7.1 & 5.0$\pm$3.2 & \textbf{3.2$\pm$0.2} & 3.3$\pm$0.2 \\
    \addlinespace[0.25em]
    HTRU2 & Pruning Rate & 62.0$\pm$11.7 & 67.3$\pm$10.6 & 70.7$\pm$10.4 & 75.3$\pm$10.0 & 77.3$\pm$6.5 & 78.7$\pm$6.9 & \textbf{81.3$\pm$3.4} \\
      & Fidelity & \textbf{100.0$\pm$0.0} & \textbf{100.0$\pm$0.0} & \textbf{100.0$\pm$0.0} & \textbf{100.0$\pm$0.0} & \textbf{100.0$\pm$0.0} & \textbf{100.0$\pm$0.0} & \textbf{100.0$\pm$0.0} \\
      & Time & 4.7$\pm$2.1 & 4.1$\pm$2.2 & 2.5$\pm$1.6 & 1.5$\pm$1.2 & 1.5$\pm$1.2 & 0.9$\pm$0.7 & \textbf{0.5$\pm$0.0} \\
    \addlinespace[0.25em]
    JM1 & Pruning Rate & 6.0$\pm$5.3 & 21.3$\pm$10.9 & 29.3$\pm$7.7 & 36.7$\pm$8.2 & 45.3$\pm$5.0 & 51.3$\pm$5.8 & \textbf{56.0$\pm$6.5} \\
      & Fidelity & \textbf{100.0$\pm$0.0} & \textbf{100.0$\pm$0.0} & \textbf{100.0$\pm$0.0} & \textbf{100.0$\pm$0.0} & \textbf{100.0$\pm$0.0} & 99.9$\pm$0.1 & 99.9$\pm$0.1 \\
      & Time & 165.5$\pm$59.6 & 68.0$\pm$37.3 & 51.6$\pm$28.5 & 34.6$\pm$17.5 & 13.3$\pm$3.5 & 7.6$\pm$2.5 & \textbf{3.4$\pm$2.6} \\
    \addlinespace[0.25em]
    Pima & Pruning Rate & 17.3$\pm$8.3 & 22.7$\pm$10.6 & 26.7$\pm$13.5 & 30.0$\pm$15.2 & 34.7$\pm$17.1 & 45.3$\pm$19.4 & \textbf{55.3$\pm$17.7} \\
      & Fidelity & \textbf{100.0$\pm$0.0} & \textbf{100.0$\pm$0.0} & \textbf{100.0$\pm$0.0} & \textbf{100.0$\pm$0.0} & 99.9$\pm$0.3 & 98.6$\pm$1.1 & 98.3$\pm$1.1 \\
      & Time & 42.5$\pm$17.8 & 48.1$\pm$26.6 & 48.0$\pm$28.6 & 47.0$\pm$28.4 & 33.8$\pm$16.9 & 19.4$\pm$10.2 & \textbf{12.0$\pm$9.9} \\
    \addlinespace[0.25em]
    PoL & Pruning Rate & 52.0$\pm$6.9 & 67.3$\pm$10.6 & 73.3$\pm$8.7 & 78.7$\pm$5.0 & \textbf{79.3$\pm$5.3} & \textbf{79.3$\pm$5.3} & \textbf{79.3$\pm$5.3} \\
      & Fidelity & \textbf{100.0$\pm$0.0} & \textbf{100.0$\pm$0.0} & \textbf{100.0$\pm$0.0} & \textbf{100.0$\pm$0.0} & \textbf{100.0$\pm$0.0} & \textbf{100.0$\pm$0.0} & \textbf{100.0$\pm$0.0} \\
      & Time & 12.6$\pm$4.3 & 5.9$\pm$2.4 & 3.1$\pm$1.7 & 0.7$\pm$0.1 & 0.6$\pm$0.1 & 0.6$\pm$0.1 & \textbf{0.5$\pm$0.1} \\
    \addlinespace[0.25em]
    Seeds & Pruning Rate & 49.3$\pm$12.5 & 62.7$\pm$10.0 & 71.3$\pm$9.6 & 82.0$\pm$6.2 & 87.3$\pm$4.9 & 89.3$\pm$3.3 & \textbf{91.3$\pm$1.6} \\
      & Fidelity & \textbf{100.0$\pm$0.0} & \textbf{100.0$\pm$0.0} & \textbf{100.0$\pm$0.0} & 99.0$\pm$1.2 & 99.0$\pm$1.2 & 99.0$\pm$1.2 & 98.1$\pm$1.8 \\
      & Time & 42.9$\pm$17.2 & 50.7$\pm$20.2 & 30.9$\pm$16.4 & 17.9$\pm$16.1 & 3.3$\pm$2.5 & 1.7$\pm$0.7 & \textbf{1.0$\pm$0.5} \\
    \addlinespace[0.25em]
    Spambase & Pruning Rate & 2.7$\pm$3.9 & 6.0$\pm$6.5 & 8.7$\pm$6.9 & 12.7$\pm$8.0 & 19.3$\pm$7.4 & 40.7$\pm$7.1 & \textbf{48.7$\pm$4.5} \\
      & Fidelity & \textbf{100.0$\pm$0.0} & \textbf{100.0$\pm$0.0} & \textbf{100.0$\pm$0.0} & \textbf{100.0$\pm$0.0} & 99.8$\pm$0.1 & 99.7$\pm$0.2 & 99.4$\pm$0.3 \\
      & Time & 875.5$\pm$339.6 & 272.9$\pm$50.5 & 217.1$\pm$35.9 & 114.5$\pm$30.7 & 49.2$\pm$7.3 & 19.0$\pm$5.3 & \textbf{6.2$\pm$2.4} \\

    \bottomrule
  \end{tabular*}
\end{table*}

\clearpage

\subsection{Results for $L_1$ Minimization}\label{appendix:w1_min}
We report results when approximating the $\|\bm{w}\|_0$ objective by minimizing $\|\bm{w}\|_1$ instead.
Compared to $\|\bm{w}\|_0$, the MILP can be easier to solve, but the sparsity of the solution may be affected by the approximation.
\begin{figure}[H]
  \centering
  \includegraphics[width=0.65\linewidth]{figures/legend/legend_fipe_cl_pypruning.pdf}
  
  \includegraphics[width=\linewidth]{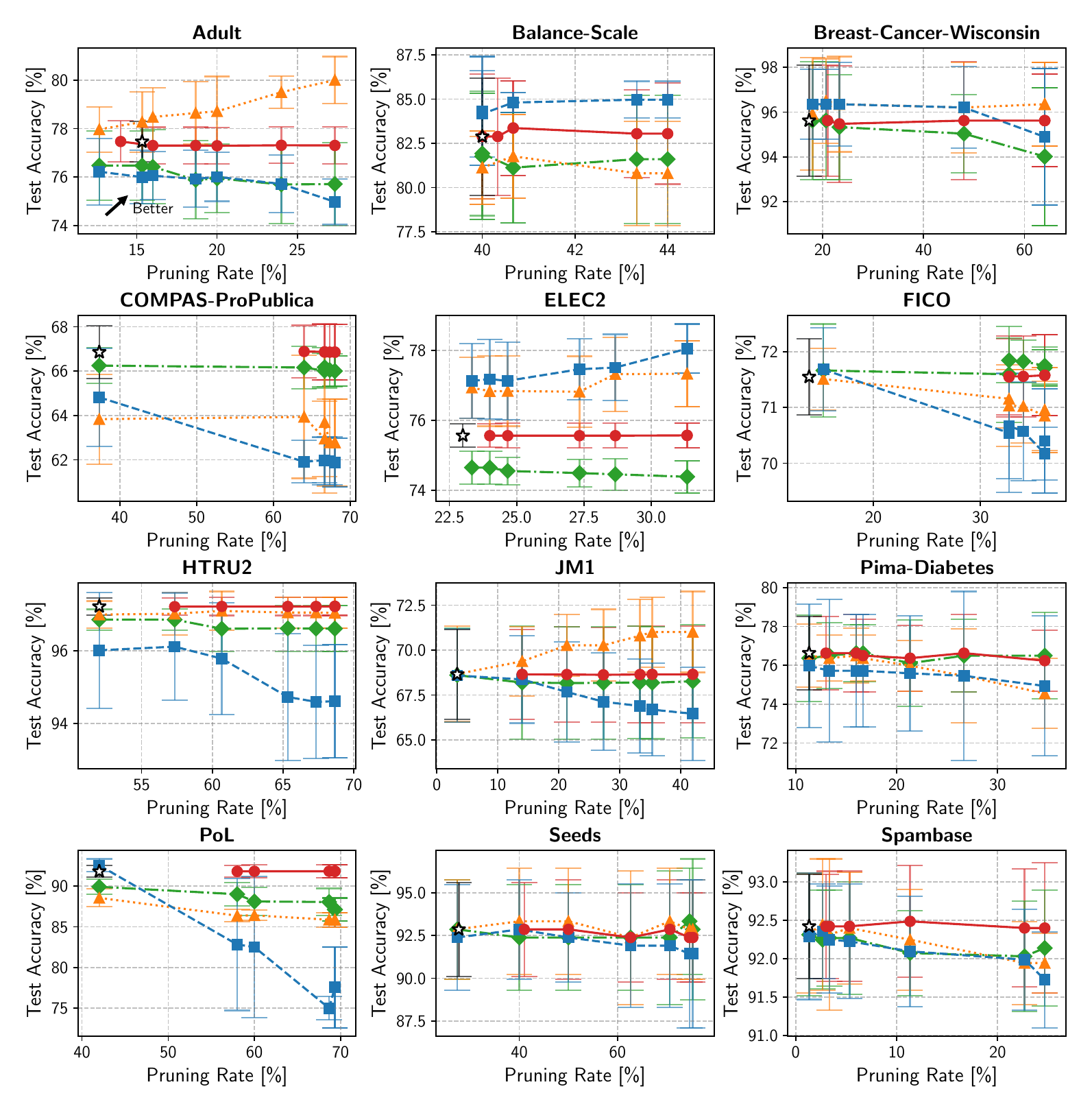}
  \caption{($L_1$) Test Accuracy - Pruning Rate (12 datasets).}
  \label{fig:appendix-l1-acc-panels}
\end{figure}

\begin{figure}[t]
  \centering
  \includegraphics[width=0.65\linewidth]{figures/legend/legend_fipe_cl_pypruning.pdf}
  
  \includegraphics[width=\linewidth]{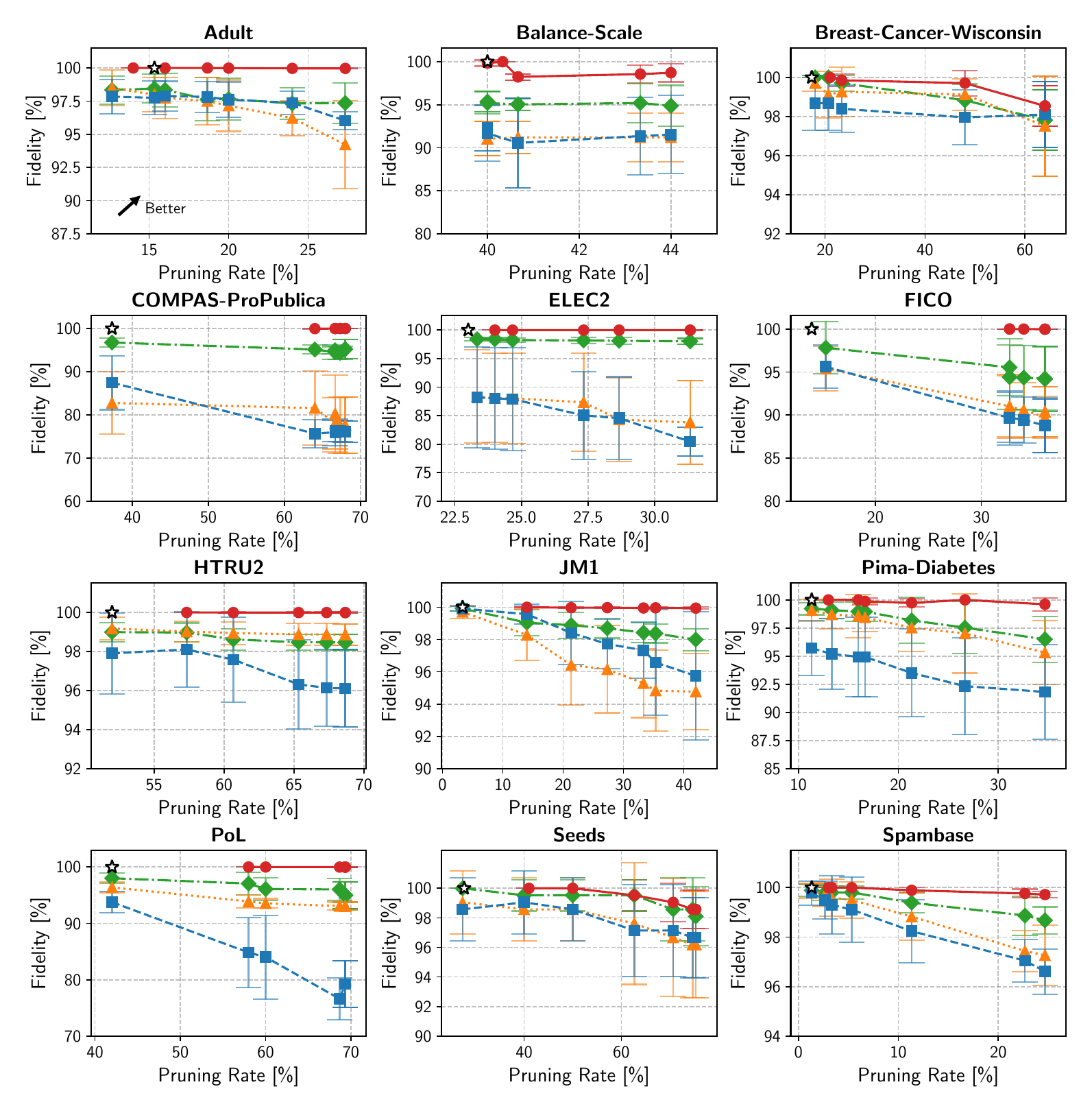}
  \caption{($L_1$) Fidelity - Pruning Rate (12 datasets).}
  \label{fig:appendix-l1-fid-panels}
\end{figure}

\begin{figure}[t]
  \centering
  \includegraphics[width=0.5\linewidth]{figures/legend/legend_cl_pypruning.pdf}
  
  \includegraphics[width=\linewidth]{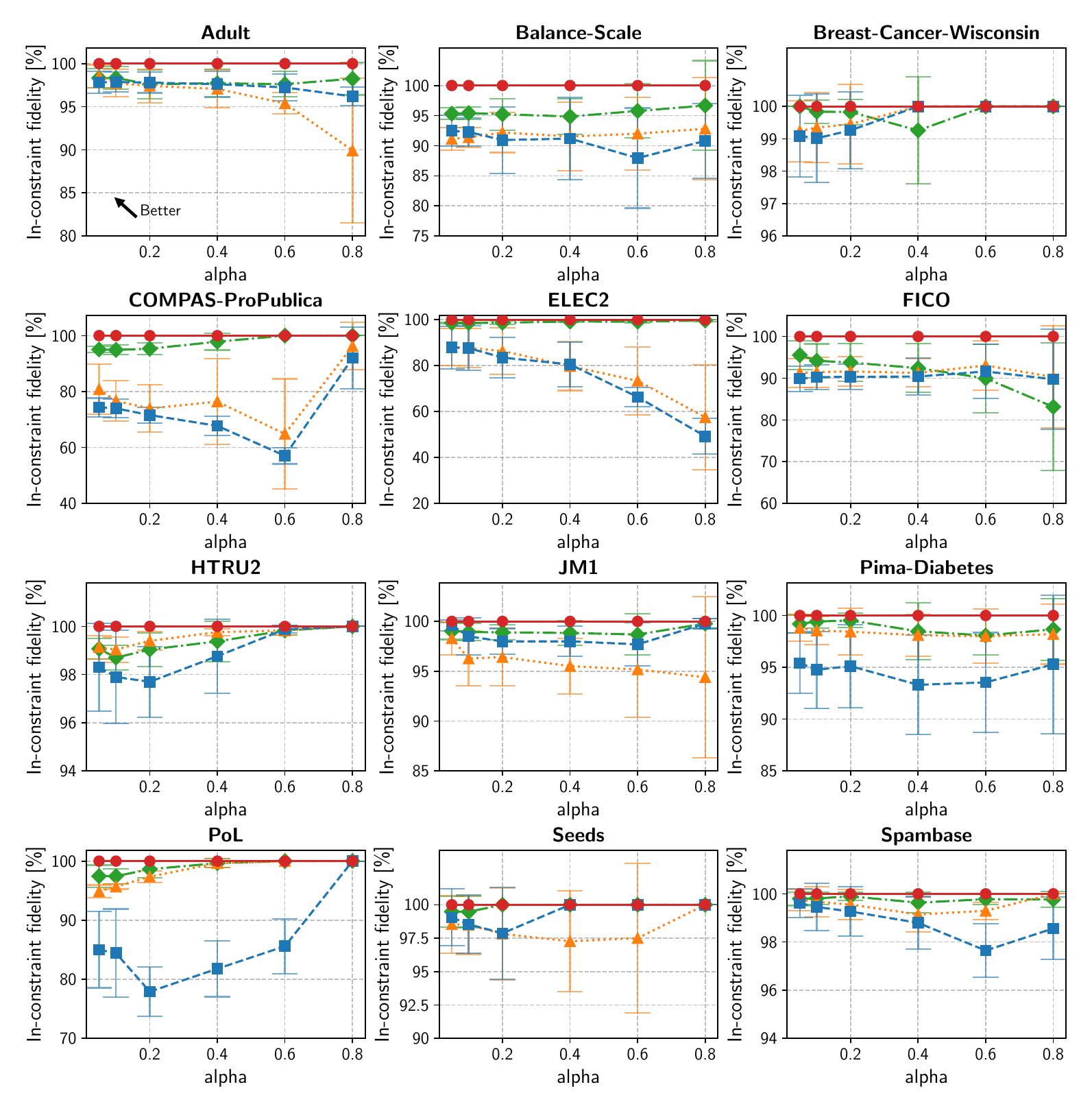}
  \caption{($L_1$) In-constraint Fidelity - alpha (12 datasets).}
  \label{fig:appendix-l1-inconstraint-panels}
\end{figure}

\clearpage

\subsection{Comparison Against Additional Pruning Baselines}
\label{subsec:appendix-additional-baselines}
\label{subsec:appendix-all-pypruning}

We compare PINE with two additional families of pruning baselines to test whether its fidelity advantage is robust across broader pruning methods.
The first family contains all 15 PyPruning baselines, which are accuracy-oriented methods that select a target number of retained trees. 
The second family contains three simple heuristics: \texttt{LeafMagnitude-TopK} scores a tree by its average absolute leaf value, 
\texttt{TreeGain-TopK} scores a tree by its total split gain, and \texttt{FeatImpTree-TopK} scores a tree by summing feature-importance-weighted split contributions. 
We evaluate all methods on 12 datasets over five random seeds using XGBoost ensembles with $M=30$ trees and maximum depth $D=2$; for each dataset, seed, and $\alpha$, each baseline is given the number of trees retained by PINE as its target ensemble size.

\Cref{tab:appendix-additional-baselines} reports the mean fidelity across datasets and seeds for all methods.
PINE attains the highest mean fidelity, and even the strongest non-PINE baseline, \texttt{cluster\_centroids}, remains 2.83 pp lower. 
Among the simple Top-K heuristics, \texttt{LeafMagnitude-TopK} and \texttt{TreeGain-TopK} rank near the top of the non-PINE baselines, but they are still lower than PINE by 3.60 and 4.07 pp, respectively. 
This suggests that simpler heuristic methods can identify useful trees, but may not be sufficient to preserve the original ensemble's decisions as well as PINE.

\begin{table}[H]
  \centering
  \footnotesize
  \caption{Fidelity comparison against additional pruning baselines.}
  \label{tab:appendix-additional-baselines}
  \label{tab:appendix-all-pypruning}
  \label{tab:appendix-simple-topk}
  \begin{tabular}{@{}ll@{}}
    \toprule
    Method & Fid. [\%] \\
    \midrule
    PINE-CL & \textbf{99.57} \\
    \texttt{cluster\_centroids} & 96.74 \\
    \texttt{LeafMagnitude-TopK} & 95.97 \\
    \texttt{TreeGain-TopK} & 95.50 \\
    \texttt{error\_ambiguity} & 95.42 \\
    \texttt{drep} & 95.41 \\
    \texttt{individual\_error} & 95.23 \\
    \texttt{FeatImpTree-TopK} & 94.41 \\
    \texttt{largest\_mean\_distance} & 94.31 \\
    \texttt{reduced\_error} & 94.22 \\
    \texttt{individual\_kappa\_statistic} & 92.84 \\
    \texttt{complementariness} & 92.75 \\
    \texttt{cluster\_accuracy} & 91.07 \\
    \texttt{individual\_contribution} & 90.84 \\
    \texttt{combined\_error} & 89.69 \\
    \texttt{margin\_distance} & 88.82 \\
    \texttt{individual\_margin\_diversity} & 88.50 \\
    \texttt{combined} & 86.90 \\
    \texttt{reference\_vector} & 86.59 \\
    \bottomrule
  \end{tabular}
\end{table}

\subsection{Practical $\alpha$ Selection}\label{subsec:appendix-practical-alpha-selection}
We describe two held-out $\alpha$-selection rules that turn the finite grid of calibrated PINE models into a practical operating-point choice. 
Let $\mathcal{A}\subset(0,1)$ be the pre-specified finite set of candidate $\alpha$ values. Let $\rho_\star\in(0,1)$ denote the target fidelity, and let $\Dsel=\{\bm{x}_i^{\mathrm{sel}}\}_{i=1}^{n_{\mathrm{sel}}}\subseteq\mathcal{X}$ be the selection split with $n_{\mathrm{sel}}\in\mathbb{N}$. 
For each $\alpha\in\mathcal{A}$, let $\bm{w}_{\alpha}\in\mathbb{R}_{\ge0}^M$ denote the PINE weights obtained at that candidate, and define the held-out mismatch rate $\hat r_{\mathrm{sel}}(\alpha)\in[0,1]$ as the fraction of inputs in $\Dsel$ on which PINE and the original ensemble predict different labels:
\begin{equation*}
\hat r_{\mathrm{sel}}(\alpha)
=
\frac{1}{n_{\mathrm{sel}}}
\sum_{\bm{x}\in\Dsel}
\ind\!\left[
\hat{y}(\bm{x};\bm{w}_{\alpha})
\neq
\hat{y}(\bm{x};\bm{w}^{(0)})
\right].
\end{equation*}
Let $K_{\mathrm{sel}}(\alpha)\in\{0,\ldots,n_{\mathrm{sel}}\}$ denote the numerator in the above expression, i.e., the number of inputs in $\Dsel$ on which the two predictions differ, so that $\hat r_{\mathrm{sel}}(\alpha)=K_{\mathrm{sel}}(\alpha)/n_{\mathrm{sel}}$. 
The empirical selector chooses the largest $\alpha\in\mathcal{A}$ satisfying $1-\hat r_{\mathrm{sel}}(\alpha)\ge \rho_\star$. 
The confidence-bound-based selector is more conservative and uses a Bonferroni finite-grid risk-control rule in the spirit of Learn-then-Test~\cite{Angelopoulos2025LearnThenTest}: with failure probability $\delta=0.05\in(0,1)$, it chooses the largest $\alpha\in\mathcal{A}$ satisfying
\begin{equation}
U_{\mathrm{CP}}\!\left(K_{\mathrm{sel}}(\alpha), n_{\mathrm{sel}}, \delta/|\mathcal{A}|\right)
\le 1-\rho_\star .
\end{equation}
Here, $U_{\mathrm{CP}}(k,n,\eta)$, with $k\in\{0,\ldots,n\}$, $n\in\mathbb{N}$, and $\eta\in(0,1)$, is the one-sided Clopper-Pearson upper confidence bound on a binomial mismatch risk after observing $k$ prediction mismatches among $n$ selection examples. 
For a pre-specified finite grid and an independent selection split, the Bonferroni correction gives simultaneous control of the population mismatch risks over all candidates with probability at least $1-\delta$; therefore any data-dependent choice among the certified candidates, including the largest $\alpha$, is covered by this risk-control statement. 
If no candidate $\alpha$ satisfies the confidence-bound criterion, we fall back to the unpruned ensemble.

We evaluate these selectors with a fit/calibration/selection/test split with proportions $48\!:\!16\!:\!16\!:\!20$ and the candidate grid $\mathcal{A}=\{0.05,0.1,0.2,0.4,0.6,0.8,0.9,0.95\}$. 
The selection split is not used to fit the original ensemble, calibrate $\tau(\alpha)$, or construct the candidate PINE models, and $\Dtest$ is used only for final evaluation.
\Cref{tab:appendix-alpha-selection} shows that the empirical selector is more aggressive, while the confidence-bound-based selector trades pruning for higher final fidelity. 
For example, at the 95\% target, the empirical selector chooses $\alpha=0.95$ on all 12 datasets and achieves 70.8\% mean pruning with 98.77\% mean test fidelity, whereas the confidence-bound-based selector achieves 57.2\% mean pruning with 99.72\% mean test fidelity.

\begin{table}[H]
  \centering
  \normalsize
  \caption{Held-out fidelity-targeted $\alpha$ selection over 12 datasets. FB denotes fallback to the unpruned ensemble.}
  \label{tab:appendix-alpha-selection}
  \begin{tabular}{@{}ll>{\raggedright\arraybackslash}p{0.44\linewidth}ll@{}}
    \toprule
    Target & Selector & Selected $\alpha$ and number of FB across datasets & PR [\%] & Fid. [\%] \\
    \midrule
    95\% & Empirical & $\alpha=0.95$ on 12/12 & 70.8 & 98.77 \\
    95\% & Confidence-bound & $\alpha=0.95$ on 8/12, $\alpha=0.4$ on 1/12, $\alpha=0.2$ on 2/12, FB on 1/12 & 57.2 & 99.72 \\
    99\% & Empirical & $\alpha=0.95$ on 8/12, $\alpha=0.6$ on 1/12, $\alpha=0.4$ on 1/12, $\alpha=0.2$ on 2/12 & 61.1 & 98.76 \\
    99\% & Confidence-bound & $\alpha=0.95$ on 7/12, $\alpha=0.4$ on 1/12, FB on 4/12 & 39.2 & 99.96 \\
    \bottomrule
  \end{tabular}
\end{table}

\subsection{Comparison Against ForestPrune}\label{subsec:appendix-forestprune}
We compare PINE with ForestPrune~\cite{LiuMazumder2023ForestPrune} to clarify how the empirical trade-off changes when the pruning method targets leaf-level size reduction under an accuracy constraint rather than prediction equivalence. 
ForestPrune prunes tree ensembles by making per-tree contiguous depth-cut decisions and selecting a smaller leaf-level representation subject to an allowed validation-accuracy drop.
In contrast, PINE keeps the tree structures fixed, removes or reweights whole trees, and explicitly preserves the original ensemble's predictions on the calibrated in-distribution region.

We evaluate ForestPrune post hoc on the 10 binary datasets for which the ForestPrune pipeline applies, using the same data splits as PINE. 
ForestPrune uses the conservative setting \texttt{abserr}=0.005, where \texttt{abserr} bounds the allowed validation-accuracy drop relative to the original ensemble. 
In this comparison, ID fidelity (ID Fid.) denotes, for both methods, the conditional fidelity $\hat{\rho}_{\mathrm{ID}}$ in \cref{eq:conditional-fidelity}, evaluated on PINE's calibrated in-distribution region $\XID(0.8)$. 
\Cref{tab:appendix-forestprune-aggregate} shows the main trade-off. At depth 2, PINE has both higher leaf pruning and higher fidelity. 
At depth 4, ForestPrune prunes many more leaves and runs faster, but its overall fidelity drops more noticeably.

\begin{table}[H]
  \centering
  \normalsize
  \setlength{\tabcolsep}{3pt}
  \caption{Aggregate post-hoc comparison with ForestPrune on 10 binary datasets.
  ForestPrune uses \texttt{abserr}=0.005; PINE uses $\alpha=0.8$.}
  \label{tab:appendix-forestprune-aggregate}
  \begin{tabular}{@{}lrrrrrrrr@{}}
    \toprule
    & \multicolumn{4}{c}{ForestPrune} & \multicolumn{4}{c}{PINE} \\
    \cmidrule(lr){2-5}\cmidrule(lr){6-9}
    Setting & Leaf PR [\%] & Fid. [\%] & ID Fid. [\%] & Time [s] & Leaf PR [\%] & Fid. [\%] & ID Fid. [\%] & Time [s] \\
    \midrule
    $(M,D)=(30,2)$ & 58.05 & 96.65 & 99.24 & 8.09 & 63.64 & 99.42 & 100.00 & 3.91 \\
    $(M,D)=(30,4)$ & 74.22 & 95.56 & 98.11 & 12.07 & 29.23 & 99.92 & 100.00 & 200.34 \\
    \bottomrule
  \end{tabular}
\end{table}

We further report dataset-level results at depth $D=2$ to show that the aggregate ForestPrune trade-off is not driven by a single dataset. 
ForestPrune sometimes removes more leaves than PINE, but its overall fidelity can be substantially lower; PINE keeps ID fidelity at 100\% in all rows of this slice because it explicitly certifies equivalence on $\XID(0.8)$. 
These results support the intended use distinction: ForestPrune is preferable when aggressive leaf-level compression under an accuracy bound is the main goal, whereas PINE is preferable when preserving the deployed ensemble's decisions on likely future inputs is the main goal.

\begin{table}[H]
  \centering
  \normalsize
  \caption{Dataset-level ForestPrune comparison for $(M,D)=(30,2)$.}
  \label{tab:appendix-forestprune-dataset}
  \begin{tabular}{@{}lllllll@{}}
    \toprule
    & \multicolumn{3}{c}{ForestPrune} & \multicolumn{3}{c}{PINE} \\
    \cmidrule(lr){2-4}\cmidrule(lr){5-7}
    Dataset & Leaf PR [\%] & Fid. [\%] & ID Fid. [\%] & Leaf PR [\%] & Fid. [\%] & ID Fid. [\%] \\
    \midrule
    Adult & 58.3 & 96.02 & 94.02 & 43.3 & 99.97 & 100.00 \\
    Breast & 27.1 & 100.00 & 100.00 & 86.4 & 98.54 & 100.00 \\
    COMPAS & 98.3 & 90.09 & 100.00 & 83.3 & 99.93 & 100.00 \\
    ELEC2 & 28.3 & 96.14 & 100.00 & 53.3 & 99.99 & 100.00 \\
    FICO & 13.3 & 98.52 & 99.66 & 56.7 & 99.86 & 100.00 \\
    HTRU2 & 98.3 & 99.78 & 100.00 & 86.7 & 100.00 & 100.00 \\
    JM1 & 75.0 & 96.46 & 98.72 & 53.3 & 100.00 & 100.00 \\
    Pima & 96.7 & 92.21 & 100.00 & 53.3 & 96.75 & 100.00 \\
    PoL & 66.7 & 98.41 & 100.00 & 70.0 & 100.00 & 100.00 \\
    Spambase & 18.3 & 98.91 & 100.00 & 50.0 & 99.13 & 100.00 \\
    \bottomrule
  \end{tabular}
\end{table}

\subsection{Sensitivity to the Number of Bins in the Chow-Liu Constraint}
\label{sec:appendix-chowliu-bins-sensitivity}
We investigate the sensitivity of the Chow-Liu constraint to the number of discretization bins $B$ on the 12 datasets used in our main experiments.
The following plots summarize empirical coverage, fidelity, pruning rate, and runtime as functions of the miscoverage level $\alpha$.
Across $B\in\{4,8,16\}$, the qualitative behavior remains stable: coverage tracks the conformal target, fidelity remains high, and the main trade-off is between a finer distributional representation and a heavier Oracle. 
We use $B=4$ in the main experiments because it gives a compact MILP encoding while preserving the pruning-fidelity pattern observed with larger bin counts.

\begin{figure}[H]
  \centering
  \includegraphics[width=0.3\linewidth]{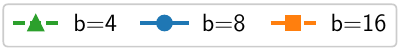}

  \begin{subfigure}[t]{0.48\linewidth}
    \centering
    
    \includegraphics[width=\linewidth,height=0.22\textheight,keepaspectratio]{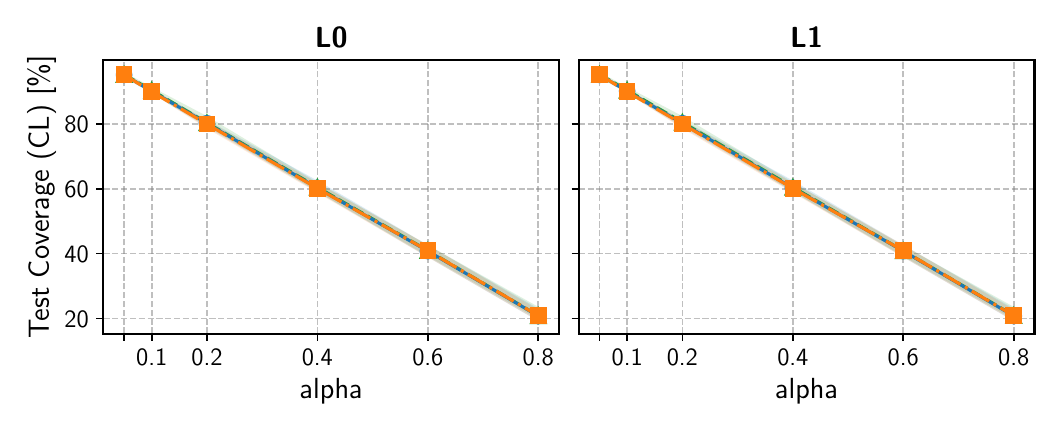}
    \caption{Empirical test coverage $\hat{\pi}_{\mathrm{ID}}$.}
    \label{fig:chowliu-bins-coverage}
  \end{subfigure}
  \hfill
  \begin{subfigure}[t]{0.48\linewidth}
    \centering
    \includegraphics[width=\linewidth,height=0.22\textheight,keepaspectratio]{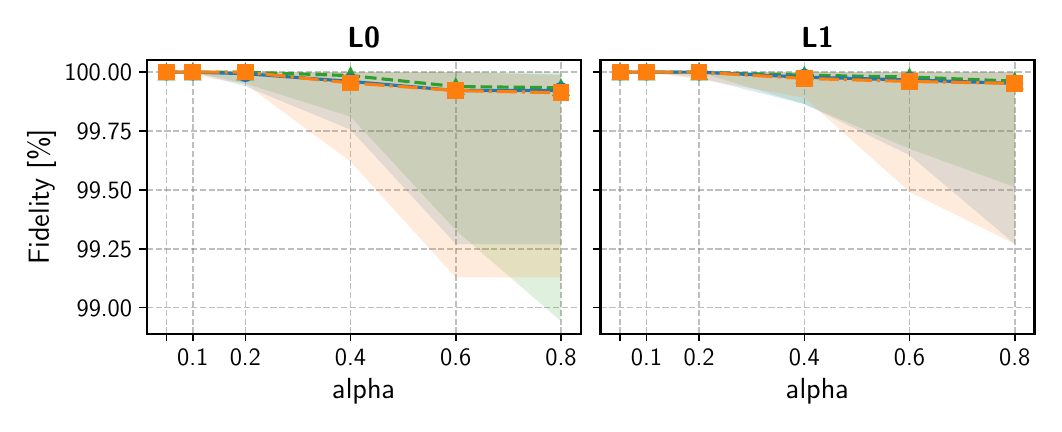}
    \caption{Fidelity $\hat{\rho}$.}
    \label{fig:chowliu-bins-fidelity}
  \end{subfigure}

  \vspace{0.8ex}

  \begin{subfigure}[t]{0.48\linewidth}
    \centering
    \includegraphics[width=\linewidth,height=0.22\textheight,keepaspectratio]{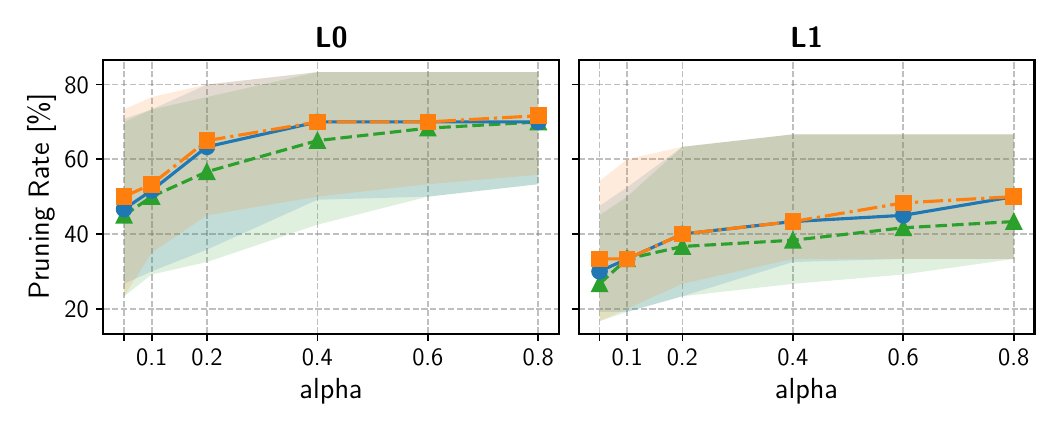}
    \caption{Pruning rate.}
    \label{fig:chowliu-bins-pruning}
  \end{subfigure}
  \hfill
  \begin{subfigure}[t]{0.48\linewidth}
    \centering
    \includegraphics[width=\linewidth,height=0.22\textheight,keepaspectratio]{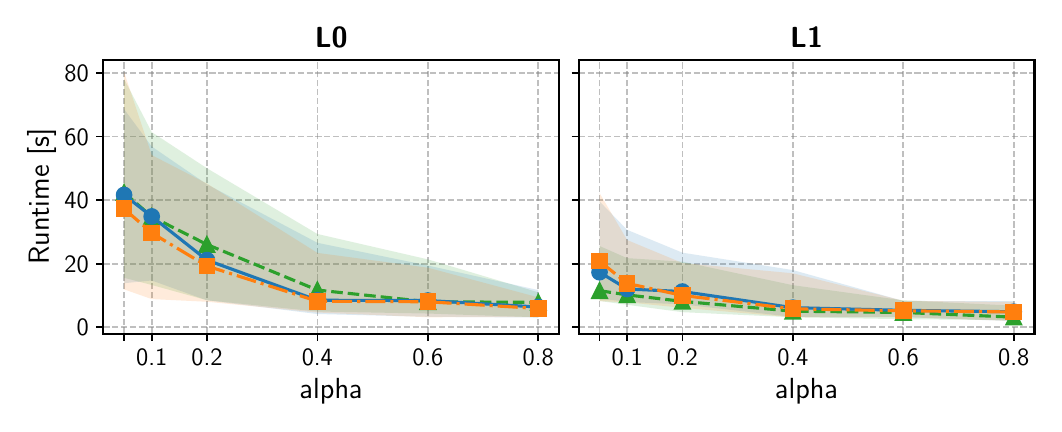}
    \caption{Runtime (s).}
    \label{fig:chowliu-bins-runtime}
  \end{subfigure}

  \caption{Sensitivity of PINE-CL to the number of discretization bins $B$ used in the Chow-Liu constraint (12 datasets).}
  \label{fig:chowliu-bins-sensitivity}
\end{figure}

\clearpage

\subsection{Additional Results: XGBoost Larger Ensembles ($M=50$) and Random Forests ($M=30$)}
\label{sec:appendix-rf-m50}
We additionally report results on larger XGBoost ensembles ($M=50$) under the $L_0$ objective.

\begin{figure}[H]
  \centering
  \includegraphics[width=0.65\linewidth]{figures/legend/legend_fipe_cl_pypruning.pdf}

  \includegraphics[width=\linewidth]{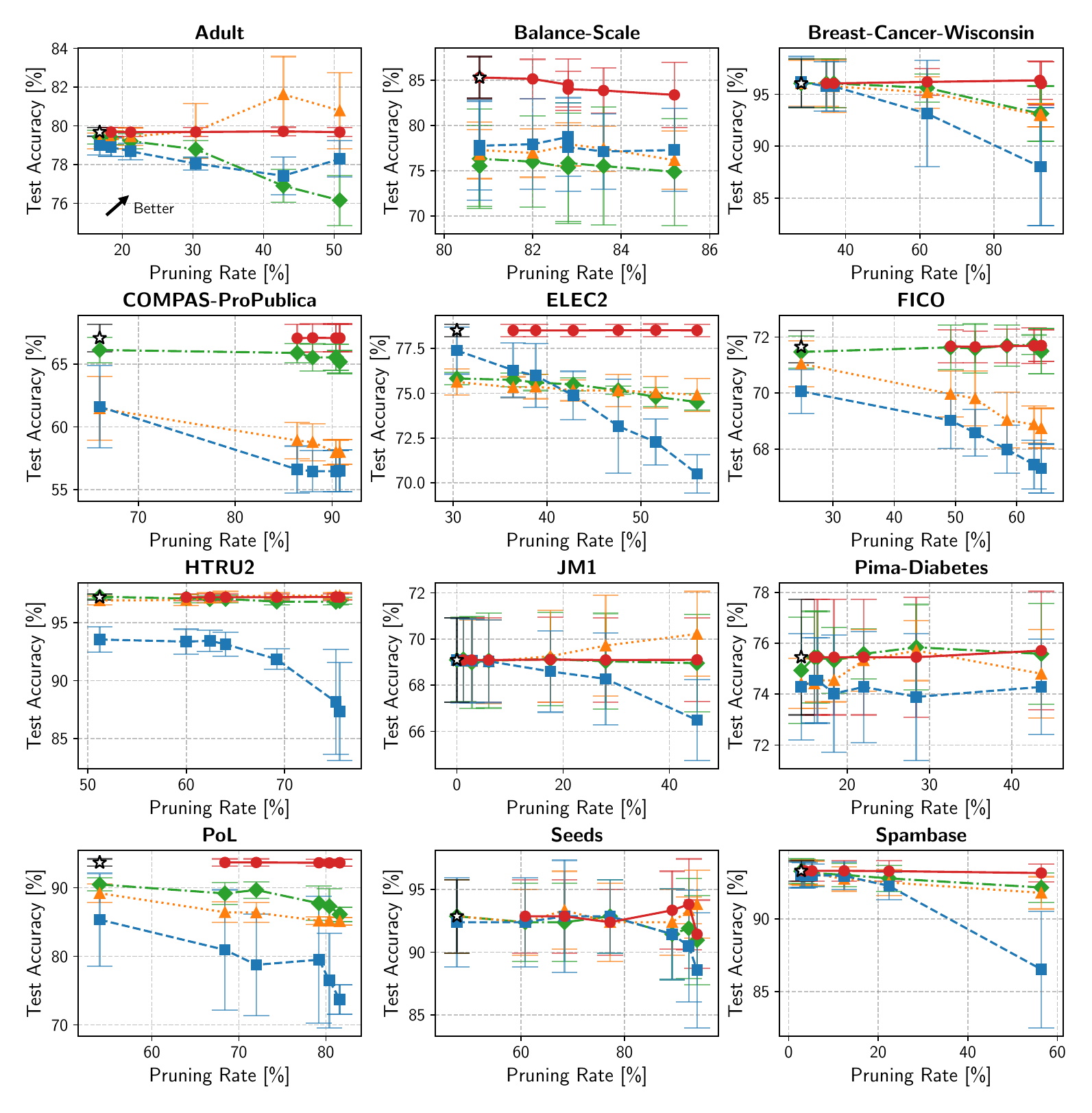}
  \caption{(XGBoost, $M=50$, $L_0$) Test Accuracy - Pruning Rate (12 datasets).}
  \label{fig:appendix-l0-acc-panels-m50}
\end{figure}

\begin{figure}[t]
  \centering
  \includegraphics[width=0.65\linewidth]{figures/legend/legend_fipe_cl_pypruning.pdf}
  
  \includegraphics[width=\linewidth]{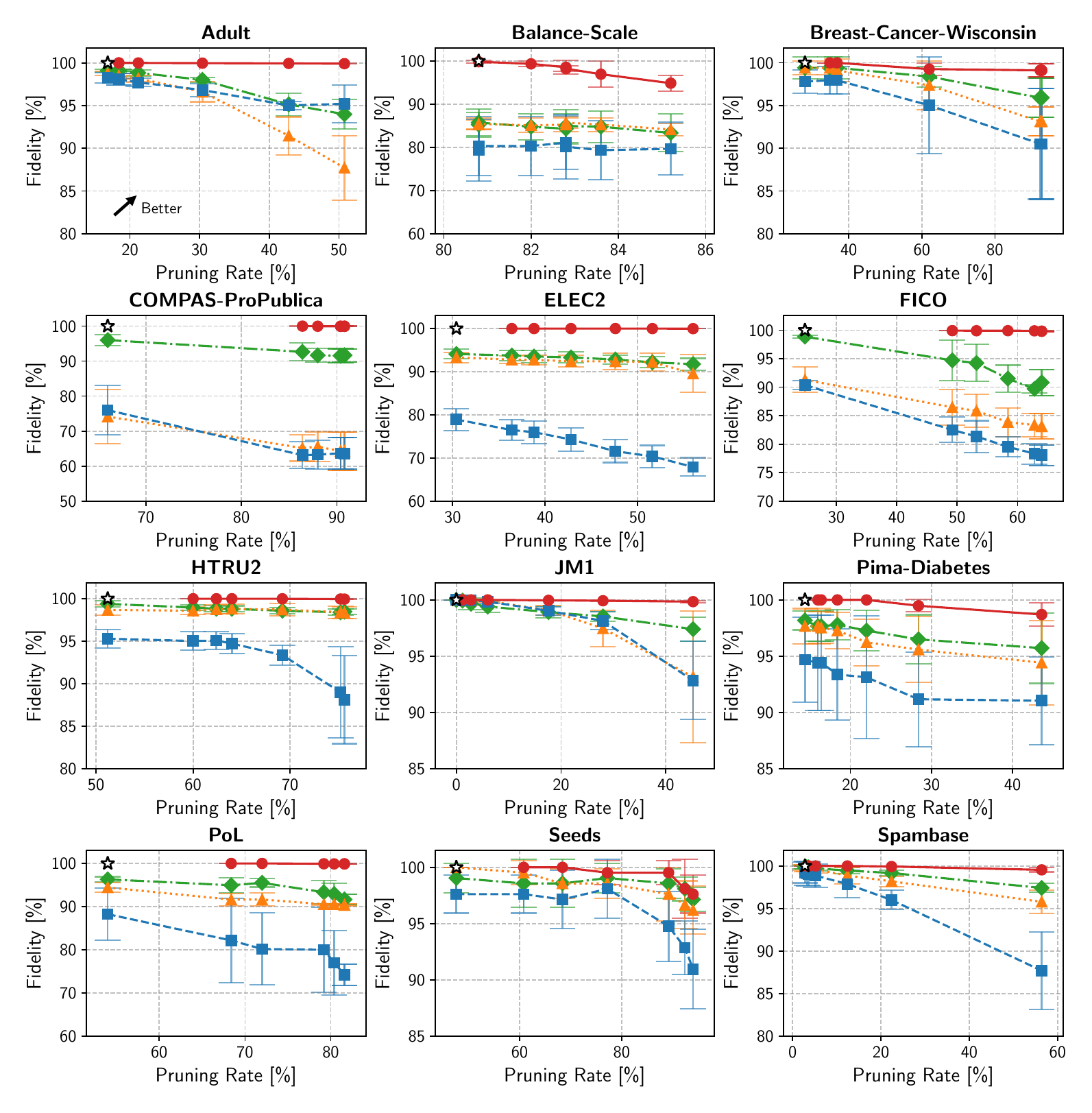}
  \caption{(XGBoost, $M=50$, $L_0$) Fidelity - Pruning Rate (12 datasets).}
  \label{fig:appendix-l0-fid-panels-m50}
\end{figure}

\begin{figure}[t]
  \centering
  \includegraphics[width=0.5\linewidth]{figures/legend/legend_cl_pypruning.pdf}
  
  \includegraphics[width=\linewidth]{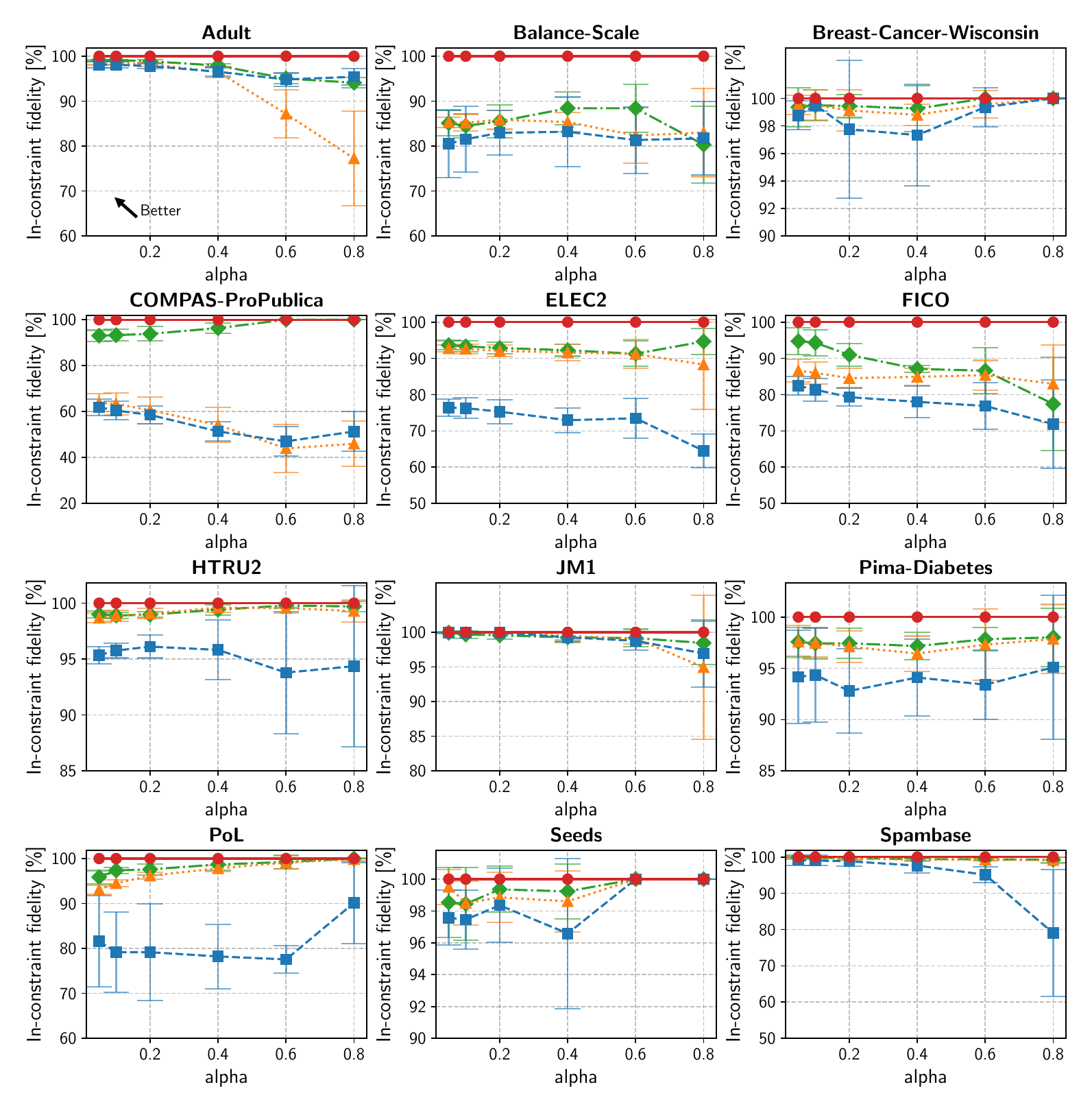}
  \caption{(XGBoost, $M=50$, $L_0$) In-constraint Fidelity - alpha (12 datasets).}
  \label{fig:appendix-l0-inconstraint-panels-m50}
\end{figure}

\clearpage

We additionally report results on Random Forests ($M=30$) under the $L_0$ objective.

\begin{figure}[H]
  \centering
  \includegraphics[width=0.65\linewidth]{figures/legend/legend_fipe_cl_pypruning.pdf}

  \includegraphics[width=\linewidth]{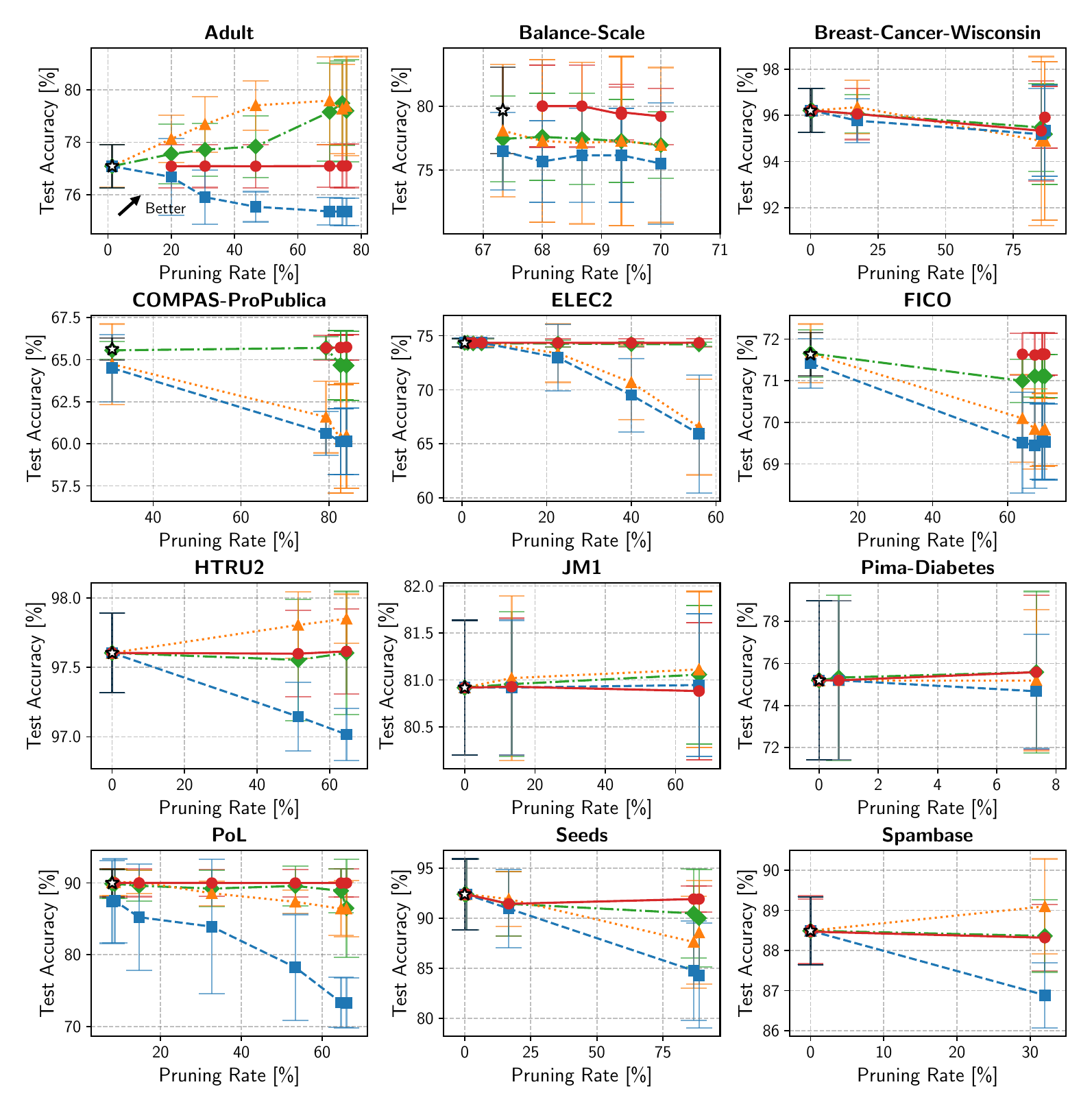}
  \caption{(Random Forest, $M=30$, $L_0$) Test Accuracy - Pruning Rate (12 datasets).}
  \label{fig:appendix-l0-acc-panels-rf}
\end{figure}

\begin{figure}[t]
  \centering
  \includegraphics[width=0.65\linewidth]{figures/legend/legend_fipe_cl_pypruning.pdf}
  
  \includegraphics[width=\linewidth]{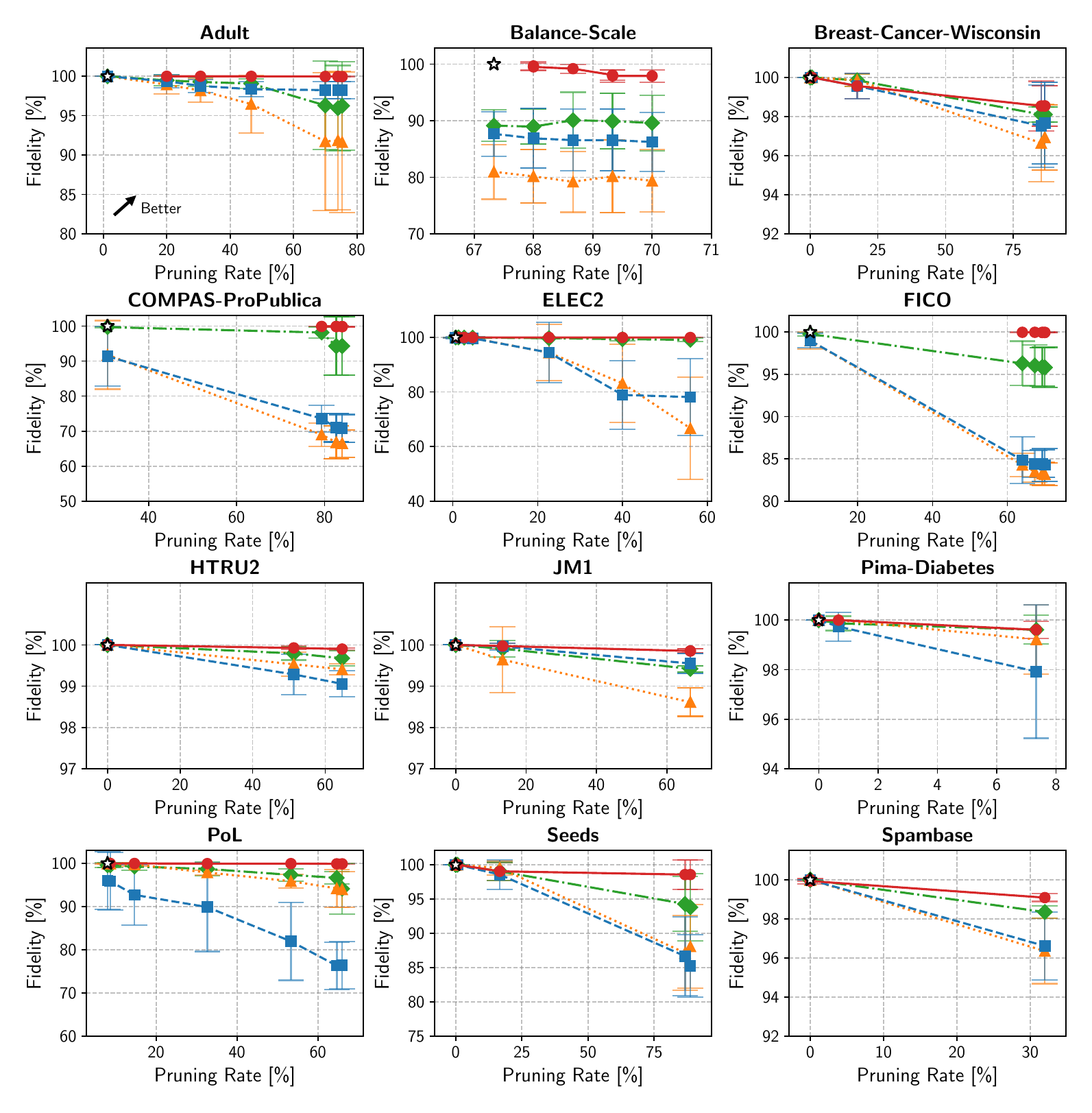}
  \caption{(Random Forest, $M=30$, $L_0$) Fidelity - Pruning Rate (12 datasets).}
  \label{fig:appendix-l0-fid-panels-rf}
\end{figure}

\begin{figure}[t]
  \centering
  \includegraphics[width=0.5\linewidth]{figures/legend/legend_cl_pypruning.pdf}
  
  \includegraphics[width=\linewidth]{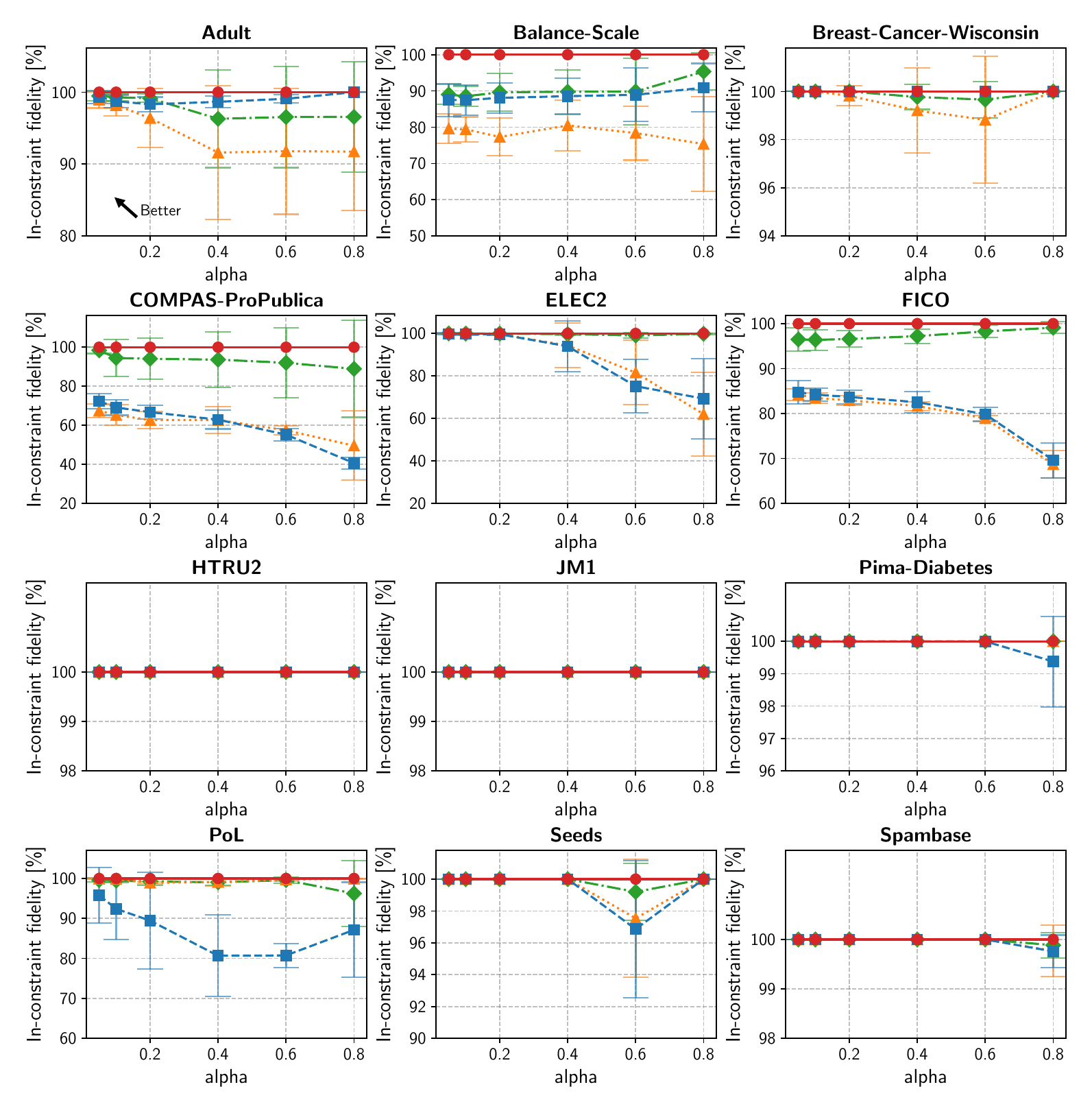}
  \caption{(Random Forest, $M=30$, $L_0$) In-constraint Fidelity - alpha (12 datasets).}
  \label{fig:appendix-l0-inconstraint-panels-rf}
\end{figure}

\clearpage

\section{Plausible Scores Representable as MILP Constraints}
\label{sec:appendix-scores}
To restrict the Oracle search region to the in-distribution region $\XID(\alpha)=\{\bm{x}\mid s(\bm{x})\le\tau(\alpha)\}$, we must be able to express the constraint $s(\bm{x})\le\tau(\alpha)$ as linear constraints in a MILP.
Accordingly, PINE uses scores that (i) can be estimated from $\Dfit$ without labels and (ii) are based on tree structure (or tree-representable models), making them amenable to MILP encoding.
Below we summarize the definition and MILP embedding of Leaf Support (LS) and Isolation Forest (IF). (Chow-Liu tree (CL) is described in \cref{subsec:pine-implementation}; implementation details for rounding bin boundaries are provided below.)

\paragraph{Leaf Support (LS).}
Leaf Support measures in-distribution plausibility via leaf visitation frequency, based on the idea that inputs that reach frequently visited leaves on the training data are more likely under the data distribution.
For a leaf $\ell\in\mathcal{L}_m$ of tree $T_m$, we define the empirical leaf visitation count on $\Dfit$ and its (smoothed) ratio as
\begin{equation}
  \begin{aligned}
    n_{m,\ell} &:= \left|\left\{\bm{x}\in\Dfit \mid \lambda_m(\bm{x})=\ell\right\}\right|,\\
    p_{m,\ell} &:= \frac{n_{m,\ell}+\beta}{\sum_{\ell'\in\mathcal{L}_m} n_{m,\ell'}+\beta|\mathcal{L}_m|},
  \end{aligned}
  \label{eq:leaf-frequency}
\end{equation}
where $\beta>0$ is a smoothing coefficient that ensures $p_{m,\ell}>0$ even for leaves that are never reached in $\Dfit$.
Let $a_{m,\ell}:=-\log p_{m,\ell}$. Then the score of input $\bm{x}$ is given as the sum of the $a_{m,\ell}$ values corresponding to the leaves reached by $\bm{x}$ in each tree:
\begin{equation}
  s_{\mathrm{LS}}(\bm{x}) := \sum_{m=1}^M a_{m,\lambda_m(\bm{x})}.
  \label{eq:ls-score}
\end{equation}
In the MILP, we introduce a binary variable $z_{m,\ell}\in\{0,1\}$ indicating whether input $\bm{x}$ reaches leaf $\ell$ in tree $m$ ($z_{m,\ell}=1\Leftrightarrow \bm{x}\in R_{m,\ell}$).
Using these leaf-indicator variables, \cref{eq:ls-score} can be written as a linear sum over $z_{m,\ell}$, so the in-distribution constraint $s_{\mathrm{LS}}(\bm{x})\le\tau(\alpha)$ can be added directly as a single linear inequality.

\paragraph{Isolation Forest (IF)~\cite{LiuTingZhou2008IsolationForest}.}
Isolation Forest is an anomaly detection method that generates many isolation trees by repeatedly splitting data using random features and split points, and measures how easily a point can be isolated (i.e., how anomalous it is) by the path length from the root to the leaf corresponding to the point.
In general, out-of-distribution points tend to be separated with fewer splits and thus have shorter path lengths, whereas in-distribution points are harder to isolate and have longer path lengths.
PINE uses this property and regards points with sufficiently long path lengths as in-distribution.

Concretely, we train $K$ isolation trees on $\Dfit$ and assign to each leaf $\ell\in\mathcal{L}_k$ of tree $I_k$ the (corrected) path length $h_{k,\ell}$ attained when $\bm{x}$ reaches that leaf.
In Isolation Forest, if splitting stops while multiple points remain in a leaf, path lengths can be underestimated; it is standard to add a correction based on the expected search length in a binary search tree:
\begin{equation}
  c(n) := 2H_{n-1} - \frac{2(n-1)}{n},
\end{equation}
where $n$ denotes the number of training points remaining in the leaf and $H_k:=\sum_{i=1}^k 1/i$ is the $k$-th harmonic number.
In PINE, we precompute $h_{k,\ell} := \text{depth}_{k}(\ell) + c(n_{k,\ell})$, where $n_{k,\ell}$ is the number of points in $\Dfit$ that fall into leaf $\ell$ of $I_k$, and store it as a per-leaf constant.

In the MILP, we introduce a binary variable $g_{k,\ell}\in\{0,1\}$ indicating whether $\bm{x}$ reaches leaf $\ell$ in tree $I_k$, and enforce $\sum_{\ell\in\mathcal{L}_k}g_{k,\ell}=1$.
Then $\sum_{\ell\in\mathcal{L}_k}h_{k,\ell}g_{k,\ell}$ represents the corrected path length in tree $I_k$ for input $\bm{x}$.
We can thus define the average path length as
\begin{equation}
  L(\bm{x})
  := \frac{1}{K}\sum_{k=1}^K \sum_{\ell\in\mathcal{L}_k} h_{k,\ell}\, g_{k,\ell}.
\end{equation}
To match the convention that smaller scores correspond to more in-distribution points, we set $s_{\mathrm{IF}}(\bm{x}):=-L(\bm{x})$.
The in-distribution constraint can then be added to the MILP as $s_{\mathrm{IF}}(\bm{x})\le\tau(\alpha)$.
Note, however, that IF introduces additional binary variables and constraints proportional to the number of trees $K$, which can make the Oracle MILP heavier.

\paragraph{Rounding bin boundaries for the Chow-Liu tree.}
\label{sec:appendix-chowliu-rounding}
In the Chow-Liu tree (CL) described in \cref{subsec:pine-implementation}, we discretize continuous features into $B$ bins and then learn the CL model.
Since bin boundaries must be representable in the Oracle MILP (i.e., consistent with existing split-indicator variables), we round each boundary to the set of split thresholds $\Theta_j$ used by the trained ensemble.

Concretely, for a continuous feature $j$, we first compute the target bin boundaries for a $B$-way split using empirical quantiles on $\Dfit$:
\begin{equation}
  \kappa_{j,b} := \mathrm{Quantile}_{b/B}\bigl(\{x_{j}\mid \bm{x}\in\Dfit\}\bigr),~ b=1,\dots,B-1.
\end{equation}
Next, given the sorted list of thresholds in $\Theta_j$, we map each $\kappa_{j,b}$ to the nearest split value as follows:
\begin{equation}
  \hat{\kappa}_{j,b} := \argmin_{t\in\Theta_j} |t-\kappa_{j,b}|.
\end{equation}
In the case of ties, we choose the larger split value.
Finally, we remove duplicates and sort the boundary list $\{-\infty,\hat{\kappa}_{j,1},\dots,\hat{\kappa}_{j,B-1},+\infty\}$.
If rounding introduces duplicate boundaries, the effective number of bins can be smaller than $B$; in our implementation, we learn the CL model with the reduced state count (and exclude features for which only $-\infty$ and $+\infty$ remain).

In terms of computational complexity, extracting and sorting $\Theta_j$ takes $\mathcal{O}(|\Theta_j|\log|\Theta_j|)$, and rounding each boundary can be done via binary search in $\mathcal{O}(\log|\Theta_j|)$.
Thus, for each feature $j$ the procedure runs in $\mathcal{O}(|\Theta_j|\log|\Theta_j| + (B-1)\log|\Theta_j|)$.

\clearpage

\section{Comparison with Other Plausible Scores}
All results in this section use XGBoost ensembles with $M=30$ trees and compare plausible scores: Chow-Liu (CL), Leaf Support (LS), and Isolation Forest (IF).
The score-comparison experiments show that PINE is not tied to a single plausibility score, but that score choice changes the trade-off among runtime, coverage, and pruning. Chow-Liu is used as the main instantiation because it captures pairwise feature dependence while remaining compact in the MILP; Leaf Support and Isolation Forest provide simpler or more flexible alternatives, but they can change the size and difficulty of the Oracle problem.

\begin{figure}[H]
  \centering
  \includegraphics[width=0.7\linewidth]{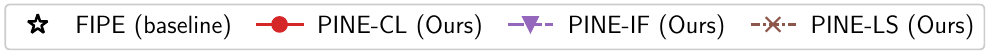}

  \includegraphics[width=\linewidth]{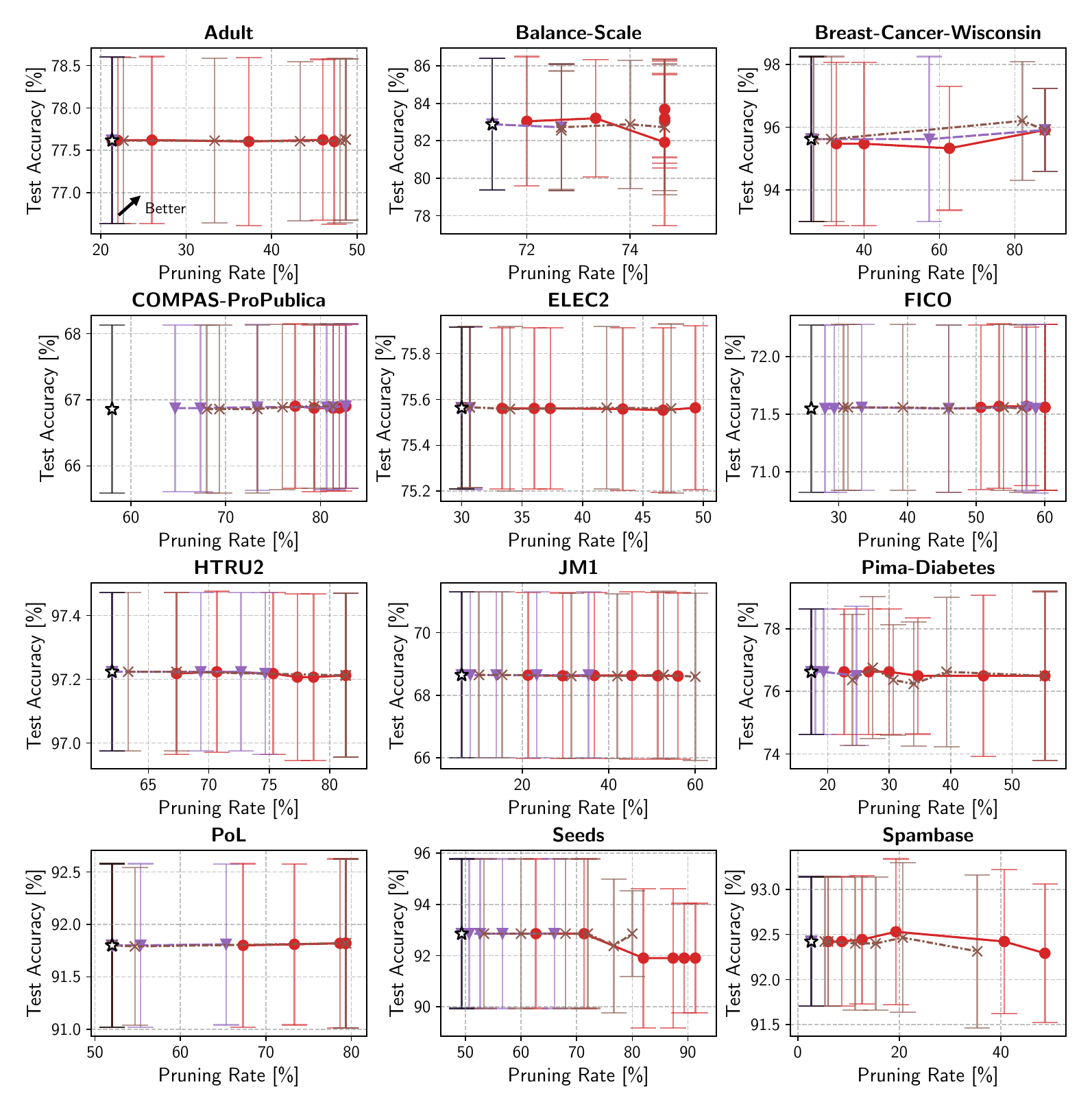}
  \caption{(XGBoost, $M=30$, $L_0$) Test Accuracy - Pruning Rate (12 datasets).}
  \label{fig:appendix-l0-acc-panels-other}
\end{figure}

\begin{figure}[t]
  \centering
  \includegraphics[width=0.7\linewidth]{figures/legend/legend_plausible.pdf}
  
  \includegraphics[width=\linewidth]{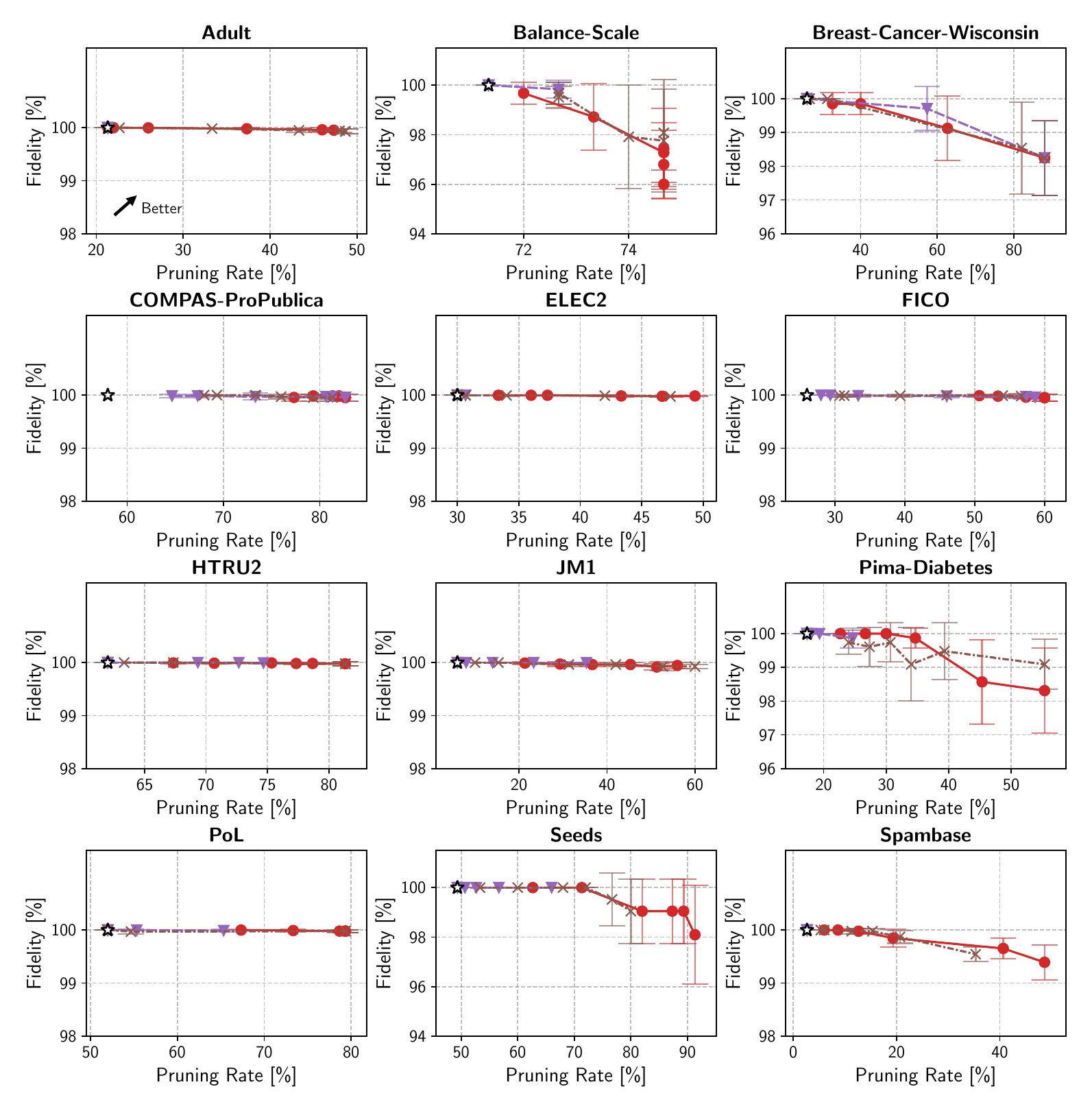}
  \caption{(XGBoost, $M=30$, $L_0$) Fidelity - Pruning Rate (12 datasets).}
  \label{fig:appendix-l0-fid-panels-other}
\end{figure}

\begin{figure}[t]
  \centering
  \includegraphics[width=\linewidth]{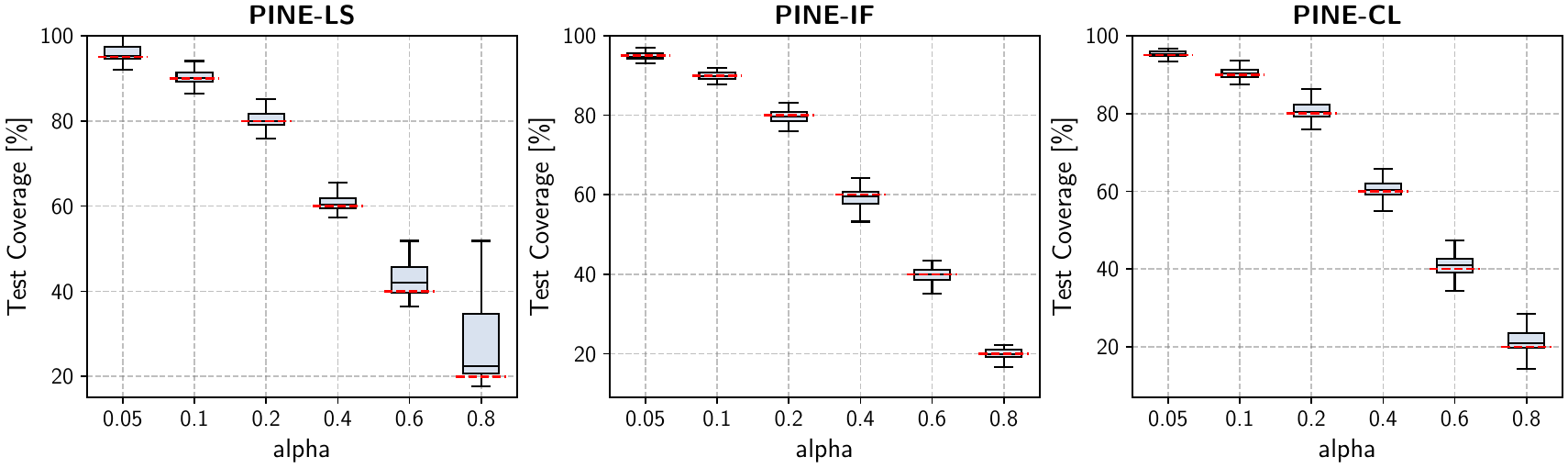}
  \caption{(XGBoost, $M=30$, $L_0$) Coverage vs. miscoverage level alpha (12 datasets).}
  \label{fig:appendix-l0-inconstraint-panels-other}
\end{figure}


\end{document}